\title{An Adaptive Method Stabilizing Activations for Enhanced Generalization}
\author{Hyunseok Seung \\
	Department of Statistics\\
	University of Georgia\\
	Athens, GA \\
	\texttt{hseung@uga.edu} \\
	\And
	Jaewoo Lee \\
	School of Computing\\
	University of Georgia\\
	Athens, GA \\
	\texttt{jaewoo.lee@uga.edu} \\
	\And
	Hyunsuk Ko \\
	School of Electrical Engineering\\
	Hanyang University\\
	Ansan, South Korea \\
	\texttt{hyunsuk@hanyang.ac.kr} \\
}
\date{}
\let\en=\ensuremath
\DeclareMathOperator{\E}{\mathbb{E}}          
\DeclarePairedDelimiter{\norm}{\lVert}{\rVert}
\DeclarePairedDelimiter{\abs}{\lvert}{\rvert}
\DeclarePairedDelimiter{\ip}{\langle}{\rangle}
\DeclareMathOperator{\diag}{diag}      
\DeclareMathOperator{\vect}{vec}                
\renewcommand{\vec}[1]{\en{\bm{\mathrm{#1}}}}
\newcommand{\hvec}[1]{\en{\hat{\vec{#1}}}}      
\newcommand{\mat}[1]{\en{{\bm{\mathrm{#1}}}}}
\newcommand{\grad}[0]{\en{\nabla}}
\newcommand{\R}[0]{\mathbb{R}}
\newcommand{\pdv}[2]{\frac{\partial #1}{\partial #2}}
\renewcommand{\th}[0]{\textsuperscript{th}\xspace}
\newcommand{\red}[1]{\textcolor{OrangeRed}{#1}}
\newcommand{\green}[1]{\textcolor{PineGreen}{#1}}
\newcommand{\blue}[1]{\textcolor{blue}{#1}}
\newcommand{\yellow}[1]{\textcolor{YellowOrange}{#1}}
\newcommand{\adaact}[0]{\textsc{AdaAct}\xspace}
\theoremstyle{plain}
\newtheorem{theorem}{Theorem}[section]
\newtheorem{lemma}[theorem]{Lemma}
\newtheorem{corollary}[theorem]{Corollary}
\theoremstyle{definition}
\newtheorem{definition}[theorem]{Definition}
\newtheorem{assumption}[theorem]{Assumption}
\theoremstyle{remark}
\begin{document}
\maketitle

\begin{abstract}
	We introduce \adaact, a novel optimization algorithm that adjusts
learning rates according to activation variance. Our method enhances
the stability of neuron outputs by incorporating neuron-wise
adaptivity during the training process, which subsequently leads to
better generalization---a complementary approach to conventional
activation regularization methods. Experimental results demonstrate
\adaact's competitive performance across standard image classification benchmarks.
We evaluate \adaact on CIFAR and ImageNet, comparing it with other state-of-the-art methods.  
Importantly, \adaact effectively bridges the gap between the convergence speed of
Adam and the strong generalization capabilities of SGD, all while
maintaining competitive execution times. Code is available at \href{https://github.com/hseung88/adaact}{https://github.com/hseung88/adaact}
\end{abstract}

\keywords{Deep learning optimization \and Adaptive gradient methods \and Gradient preconditioning}

\vspace{1em}
This is the extended version of the paper published in the \textit{2024 IEEE International Conference on Data Mining Workshops (ICDMW)}, \textcopyright\ IEEE. The published version is available at: \href{https://doi.org/10.1109/ICDMW65004.2024.00007}{10.1109/ICDMW65004.2024.00007}

\section{Introduction}
\label{sec:introduction}
Adaptive gradient methods such as Adam~\cite{kingma2015adam} and its
variants~\cite{liu2020Adam} have been the method of
choice for training deep neural networks (NNs) 
due to their faster convergence compared to
SGD~\cite{Sutskever2013OnTI}. However, a line of
studies~\cite{Wilson2017TheMV,Chen2018ClosingTG,Reddi2019OnTC} has
reported the cases in which these adaptive
methods diverge or result in worse generalization performance than SGD. While
several optimizers such as SWAT~\cite{keskar2017improving},
AdaBound~\cite{Luo2019AdaBound}, and Padam~\cite{chen2020closing}
have been proposed to mitigate the issue, these methods mostly focus
on establishing optimization bounds on the training objective, 
ignoring the generalization and stability properties of the model being trained. 

Recent work has investigated the connection between activation
stability and generalization properties of neural networks and
empirically demonstrated that stabilizing the output 
can help improve the generalization performance.
These works proposed approaches to maintain stable output distribution
among layers, which includes explicitly normalizing the
activations~\cite{srivastava14dropout,ioffe15batch,Ba2016LayerN}, 
adding a loss term to penalize the activation
variance~\cite{krueger2015regularizing,Littwin2018RegularizingBT,Ding2019RegularizingAD},
or regularizing the output into the standard normal
distribution~\cite{Joo2020RegularizingAI}.
Orthogonal to prior approaches that rely on activation
regularization, in this work,
we devise an optimization method, called \adaact, that directly promotes stable
neuron outputs during training.
Specifically, to stabilize the activations during training, \adaact
carefully controls the magnitude of updates according 
to the estimated \emph{activation variance}.
This is in contrast to vast majority of other adaptive gradient methods 
that adapt to gradient variance. Our strategy involves taking 
smaller steps when encountering high activation variance and, conversely, taking
larger steps in the presence of low activation variance. This is achieved by maintaining the running mean of activation
variance and scaling the gradient update inversely proportional to the
square root of the variance.
Seemingly our method may look similar to
FOOF~\cite{Benzing2022GradientDO} or
LocoProp~\cite{Amid2021LocoPropEB} as these methods use activation
covariance matrix to precondition the gradient.
However, we emphasize that our method is developed with a completely different
motivation of activation stabilization via variance adaptation,
while their analyses primarily focus on investigating the
effectiveness of Kronecker-Factored approximation in KFAC~\cite{martens2015optimizing}
and the connection between their optimizer and second-order methods.
In addition, these methods are inefficient as they require storing 
a large covariance matrix for each layer and involve costly matrix
inversion operation.
In contrast, our method assumes the independence between activations
and only computes the variance of individual activations (which
corresponds to the diagonal entries in the covariance matrix).
Our method is also different from other adaptive gradient methods that
maintain a per-parameter learning rate in the sense that it applies a
less aggressive adaptation strategy to avoid the pitfall of too much
adaptation. In our method, the parameters that
interact with the same input share the same learning rate.

To demonstrate the effect of adapting gradient to activation
variance, we train
LeNet-5~\cite{lecun98lenet5} 
on CIFAR10~\cite{krizhevsky2009learning} dataset using our proposed
method and visualize the activation variance (averaged over entire
training iterations) and test
accuracy in Figure~\ref{fig:cifar10_lenet5}. To calculate the averaged activation variance, 
we first flatten the activations of these layers and compute the
variance for each activation. Then we average these variances over
iterations.
As shown in the figure, the network trained using \adaact yields the
smallest activation variance in all layers and achieves higher test
accuracy at the end of training compared to momentum SGD and
Adam. Adam shows faster convergence at the early stage
of training thanks to its fast adaptation capability, which results in higher activation variance. See Figure~\ref{fig:cifar10_lenet5_actvar_all} in Appendix~\ref{apdx:adaact_actvar_bound} for the unaveraged activation variance plots.

\begin{figure}[tb]
\centering
\includegraphics[width=0.6\textwidth]{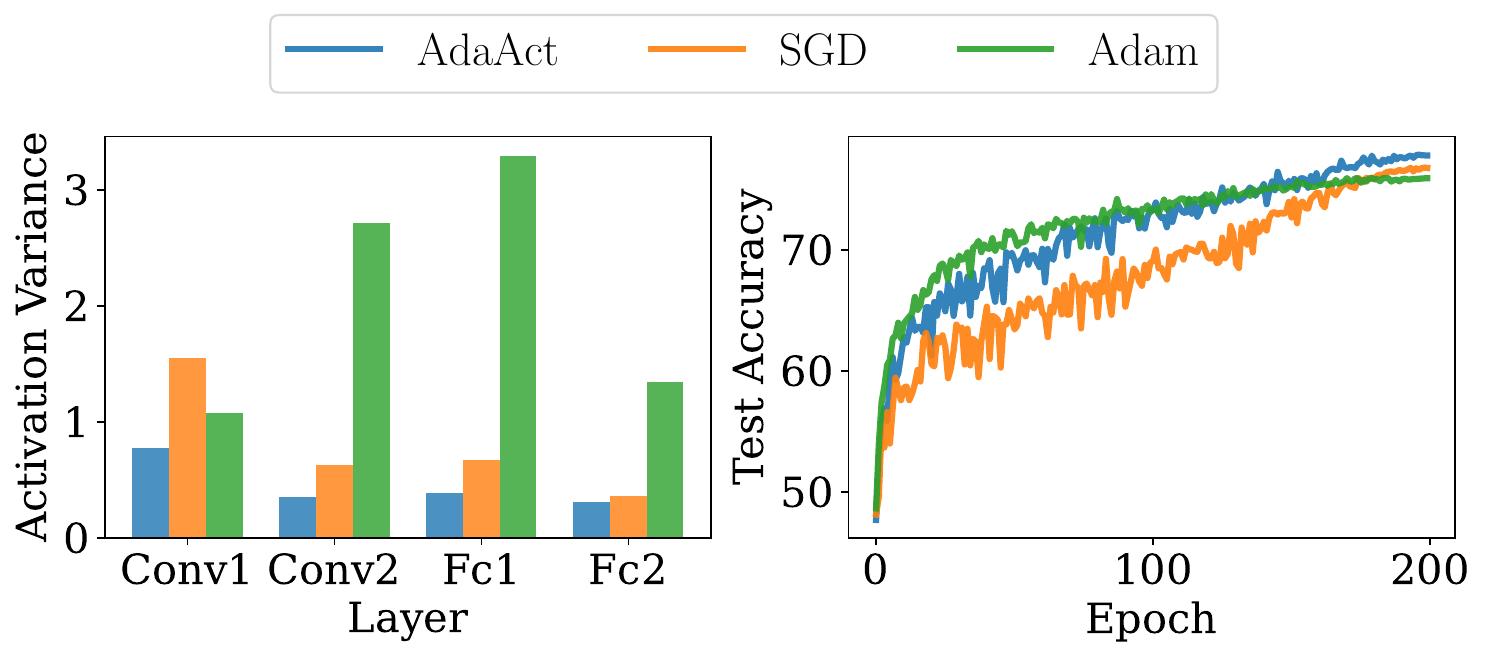}
\caption{(Left) Averaged activation variance from each hidden layer and (Right) test accuracy training LeNet-5 on CIFAR10.}
\label{fig:cifar10_lenet5}
\end{figure}

The key contributions of our work can be summarized as follows:
\begin{itemize}[leftmargin=*]
\item We propose a novel optimizer that stabilizes the neuron outputs via
  activation variance adaptation.
\item Our proposed method demonstrates improved generalization compared to
state-of-the-art adaptive methods.
Its convergence speed is similar to that of Adam while at the end of
training it achieves good generalization performance comparable to that
of highly tuned SGD.
\item To evaluate the performance of proposed method, we conduct extensive
experiments on image classification task with CIFAR 10/100 and ImageNet
dataset using various architectures, including ResNet, DenseNet, and
ViT.
Importantly, it 
achieves enhanced performance while maintaining a comparable execution
time to other adaptive methods.
\end{itemize}

\section{Related Work}
\label{sec:related}
In this section, we provide an overview of relevant literature that
both underpins and complements our work.  

\paragraph{Adaptive Methods.} Adaptive methods such as
AdaGrad~\cite{duchi2011adaptive}, RMSProp~\cite{hinton12rmsprop}, and
Adam~\cite{kingma2015adam} have enhanced NN training due to their
superior convergence speeds compared to
SGD~\cite{Sutskever2013OnTI}. However, concerns have emerged about
over-specialization with these methods, potentially impacting model
generalization. Specifically, \cite{Wilson2017TheMV} pointed out that
these methods might accentuate the generalization gap compared to
SGD. Additionally, \cite{Chen2018ClosingTG} highlighted high
adaptivity as a root cause, and \cite{Reddi2019OnTC} mentioned
contexts where Adam may not 
converge. 
In response to these challenges, a variety of optimization methods
have been proposed. Nadam~\cite{dozat2016incorporating} synergizes the
advantages of Adam and Nesterov's accelerated gradient
to promote better convergence and generalization.
Padam~\cite{Chen2018ClosingTG}
features a tunable hyperparameter to bridge the gap between Adam and
SGD. AdamW~\cite{Loshchilov2019DecoupledWD} decouples weight decay
from adaptive learning rates.
AdaBound~\cite{Luo2019AdaBound} modulates the learning
rates in adaptive methods, bounding them based on the traditional SGD
approach. AMSGrad~\cite{Reddi2019OnTC} enforces bounds on learning
rates by leveraging the maximum observed moving average value.
AdaBelief~\cite{zhuang2020adabelief} monitors the moving average
of squared gradient discrepancies versus their respective moving
average, differentiating genuine gradient noise from actual gradient
shifts. Adai~\cite{Xie2020AdaptiveID} isolates the influences of the
adaptive learning rate and momentum within Adam dynamics.
Lastly, Radam~\cite{Liu2020OnTV} incorporates a term that
tempers the adaptivity of learning rates in initial training phases,
fostering more consistent and dependable training. 

\paragraph{Activation Regularization.} Recent studies have emphasized
the significance of regularizing activations for better model
generalization. Dropout~\cite{srivastava14dropout} randomly nullifies
activations, preventing overreliance. Batch
normalization~\cite{ioffe15batch} and layer
normalization~\cite{Ba2016LayerN} maintain consistent activation
distributions across batches or features
respectively. \cite{krueger2015regularizing} proposed a regularization
technique for recurrent neural networks (RNNs) that mitigates abrupt
activation changes, while \cite{Merity2017RevisitingAR} delved deeper
into activation regularization for language tasks with
RNNs. \cite{Littwin2018RegularizingBT} and
\cite{Joo2020RegularizingAI} targeted consistent activations across
batches; the former used the variance of their sample-variances, while
the latter employed the Wasserstein
distance. \cite{Ding2019RegularizingAD} suggested distribution loss
for binarized networks, and \cite{Fu2022NeuronWS} argued consistent
neuronal responses enhance generalization in NNs. 

\paragraph{Covariance-based Gradient Preconditioning.}
\cite{Ida2016AdaptiveLR} introduced an adaptive method that
preconditions gradient descent using the gradient covariance matrix,
different from our approach of using the activation covariance matrix
as a preconditioner.
FOOF~\cite{Benzing2022GradientDO} explicitly utilizes
activation covariance for gradient preconditioning. In parallel,
LocoProp~\cite{Amid2021LocoPropEB} introduced a framework of layerwise
loss construction, and their update equation aligns with FOOF's when
employing a local squared loss. Eva~\cite{Zhang2023EvaPS} proposed a
second-order algorithm that utilizes a variant of the two covariance
matrices from KFAC, leveraging the Sherman-Morrison formula. 


\section{Preliminaries}
\label{sec:prelim}
We consider solving the following optimization problem:
\begin{equation} \label{eq:opt_prob}
  \min_{\vec{\theta} \in \R^d} F(\vec{\theta}) := \E_{\xi\sim
    \mathcal{D}}\left[f(\vec{\theta}; \xi)\right]\,,
\end{equation}
where $f:\R^d \to \R$ is differentiable and possibly
nonconvex in $\vec{\theta}$ and $\xi$ is a random variable following
an unknown but fixed distribution $\mathcal{D}$. In the context of machine
learning, $F$ corresponds to the empirical risk, i.e.,
$F(\vec{\theta}) = \mathcal{L}(\vec{\theta}; S) :=
\frac{1}{N}\sum_{i=1}^N \ell(\vec{\theta}; \vec{x}_i, y_i)$, where $\ell$ is a loss
function, $S = \{(\vec{x}_i, y_i)\}_{i=1}^N$ is the training dataset, and $\vec{\theta}$ corresponds
to model parameters. 

\subsection{Notations}
For vectors, we use element-wise operations unless specified
otherwise. $(\vec{x})_{i}$ denotes the $i$-th coordinate of
$\vec{x}$. $\norm{\vec{x}}$ represents 
$L_2$ norm unless stated otherwise. We use $[N]$ to denote the set
$\{1,2, \dots, N\}$, $\otimes$ to represent the Kronecker product, and $\odot$ for the Hadamard product. 
Consider a feed-forward NN consisting of $L$ layers trained on a
dataset $S = \{(\vec{x}_i, y_i)\}_{i=1}^n$.
Let $\mat{W}$ and $\vec{b}$ 
be the weight and bias of layer $\ell\in [L]$. It is often convenient
to include the bias term into the weight $\mat{W}$ as the last column:
$\mat{\Theta}^{(\ell)} = \begin{bmatrix} \mat{W}^{(\ell)} & 
    \vec{b}^{(\ell)}\end{bmatrix} \in \R^{m_{\ell}\times
    (m_{\ell-1}+1)}\,$.
We augment each $\vec{a}^{(\ell)}\in \R^{m_{\ell}}$ by adding a $1$ to
its last entry and denote it by $\tilde{\vec{a}}^{(\ell)}\in \R^{m_\ell+1}$.
The forward step of our NN is given by
\begin{align*}
  \vec{z}^{(\ell)}
  &= \mat{\Theta}^{(\ell)} \tilde{\vec{a}}^{(\ell-1)}\in \R^{m_{\ell}}\,,
  \qquad  \vec{a}^{(\ell)}=
    \phi(\vec{z}^{(\ell)}) 
    \in \R^{m_{\ell}}\,,\\
  \vec{\theta}^{(\ell)}
  &= \vect(\mat{\Theta}^{(\ell)}) \in \R^{m_{\ell}(m_{\ell-1}+ 1)}\,,
\end{align*}
where $\vec{z}$, $\vec{a}$, and $\phi$ represent the pre-activations,
activations, and an activation function, respectively, and
$\tilde{\vec{a}}^{(0)} = \vec{x}$.
The vectorization operator, denoted by $\vect(\cdot)$,
takes 
$\mat{X}\in \R^{m\times n}$ as input and returns a vector
$\vect(\mat{X})\in \R^{mn}$ of length
$mn$. That is, $\vect(\mat{X}) =
  \begin{bmatrix}
    \mat{X}_{*,1}^{\intercal} & \mat{X}_{*,2}^{\intercal} & \cdots & \mat{X}_{*,n}^{\intercal}
  \end{bmatrix}^{\intercal}\,$, where $\mat{X}_{*,j}$ denotes the
  $j$\th column of matrix $\mat{X}$.

\subsection{Kronecker Factored Approximate Curvature}
\cite{martens2015optimizing} introduced KFAC which approximates the Fisher information matrix (FIM) as 
$(\widetilde{\mat{F}})_{i,j} = \mat{A}_{i-1, j-1}\otimes
\mat{P}_{i,j}$, where $\mat{A}_{i,j}=\E\left[\tilde{\vec{a}}^{(i)}(\tilde{\vec{a}}^{(j)})^\intercal\right]$
denotes the covariance of the activations from layer $i$ and $j$, and
$\mat{P}_{i,j}=\E\left[\pdv{\mathcal{L}}{\vec{z}^{(i)}} 
  \pdv{\mathcal{L}}{\vec{z}^{(j)}}^\intercal\right]$ represents the
covariance of pre-activation gradients between layer $i$ and  $j$. 
Assuming the independence between layer $i$ and $j$ for $i\neq j$,
KFAC only computes the diagonal blocks of FIM, denoted by
$\mat{A}^{(\ell-1)}\otimes \mat{P}^{(\ell)} = \mat{A}_{\ell-1,\ell-1} \otimes \mat{P}_{\ell,\ell}$
, which results in the following update
rule for layer $\ell$ at iteration $t$.
\begin{align}
  \label{eq:kfac}
  \vec{\theta}_{t+1}^{(\ell)}
  &=\vec{\theta}_{t}^{(\ell)} -\eta (\mat{A}_t^{(\ell-1)} \otimes \mat{P}_t^{(\ell)})^{-1}
    \vect(\nabla_{\mat{\Theta}^{(\ell)}}{\mathcal{L}_t}) \nonumber \\ 
    &=\vec{\theta}_{t}^{(\ell)}-\eta
    \vect{\big((\mat{P}_{t}^{(\ell)})^{-1}\grad_{\mat{\Theta}^{(\ell)}}{\mathcal{L}_{t}}(\mat{A}_{t}^{(\ell-1)})^{-1}\big)}, 
\end{align}
where $\eta$ is learning rate and
$\nabla_{\mat{\Theta}^{(\ell)}}{\mathcal{L}_t} \in \R^{m_{\ell}\times
  (m_{\ell-1}+1)}$ is the gradient of $\mathcal{L}$ w.r.t. the
parameters of layer $\ell$ evaluated at time $t$.
\cite{Benzing2022GradientDO} argued that the pre-activation gradient
term $\mat{P}$, in fact,
does not contribute to superior performance of KFAC
and proposed the following update:
\begin{equation}
  \label{eq:foof}
  \vec{\theta}_{t+1}^{(\ell)}=\vec{\theta}_{t}^{(\ell)}-\eta
  \vect{\big(\grad_{\mat{\Theta}_{t}}{\mathcal{L}}(\mat{A}_{t}^{(\ell-1)})^{-1}\big)}\,.
\end{equation}
The above equation is derived by applying the principle that
an update of the weight matrix explicitly changes the layer's outputs
(pre-activations) into their gradient direction (pre-activation
gradients). 
Mathematically, this can be 
expressed as $(\mat{\mat{\Theta}} + \Delta\mat{\mat{\Theta}})\tilde{\vec{a}}
= \vec{z} + \eta \frac{\partial \mathcal{L}}{\partial \vec{z}}$ and
 such $\Delta\mat{\mat{\Theta}}$ is obtained by 
solving
$\min_{\Delta\mat{\mat{\Theta}}}{||\left(\Delta\mat{\mat{\Theta}}\right)\tilde{\vec{a}}-\eta
  \frac{\partial \mathcal{L}}{\partial
    \vec{z}}||^{2}}+\frac{\lambda}{2}||\Delta\mat{\mat{\Theta}}||^{2}$.
This suggests that obtaining optimized neuron outputs in
NNs is closely connected to preconditioning gradients with
activation covariance,
which motivated the activation variance-based adaptation in~\adaact.


\section{Algorithm}
\label{sec:algo}
In this section, we introduce \adaact, for
solving the optimization problem~\eqref{eq:opt_prob}. 
The pseudocode of algorithm is presented in
Algorithm~\ref{alg:adaact}.

For layer $\ell$, the input activation covariance matrix
$\mat{A}_t^{(\ell-1)}$ can be estimated using the samples in minibatch $\mathcal{B}_t$.
\begin{align}
  \mat{A}_t^{(\ell-1)}
  &=\E\left[\tilde{\vec{a}}^{(\ell-1)}(\tilde{\vec{a}}^{(\ell-1)})^{\intercal}\right]
  \in \R^{(m_{\ell-1}+1)\times (m_{\ell-1}+1)} \nonumber \\
  &\approx \frac{1}{\abs{\mathcal{B}_t}}\sum_{i\in \mathcal{B}_t}
    \tilde{\vec{a}}^{(\ell-1)}_i(\tilde{\vec{a}}^{(\ell-1)}_i)^{\intercal} \,,
    \label{eq:act_cov}
\end{align}
where $\vec{a}^{(\ell-1)}_i$ denotes the activation of layer $\ell-1$
when the input to the network is the $i$\th example $\vec{x}_i$ in the training
set.
The covariance matrix in~\eqref{eq:act_cov} 
could be large for many modern large scale neural networks (e.g., ViT). For
a network with $L$ layers, it requires storing $\sum_{\ell=0}^{L-1}
m_{\ell}^2$ entries. Even worse, computing its inverse takes
$\mathcal{O}(m_{\ell}^3)$ time in general.

\begin{figure}[tb]
\centering
\includegraphics[width=0.8\textwidth]{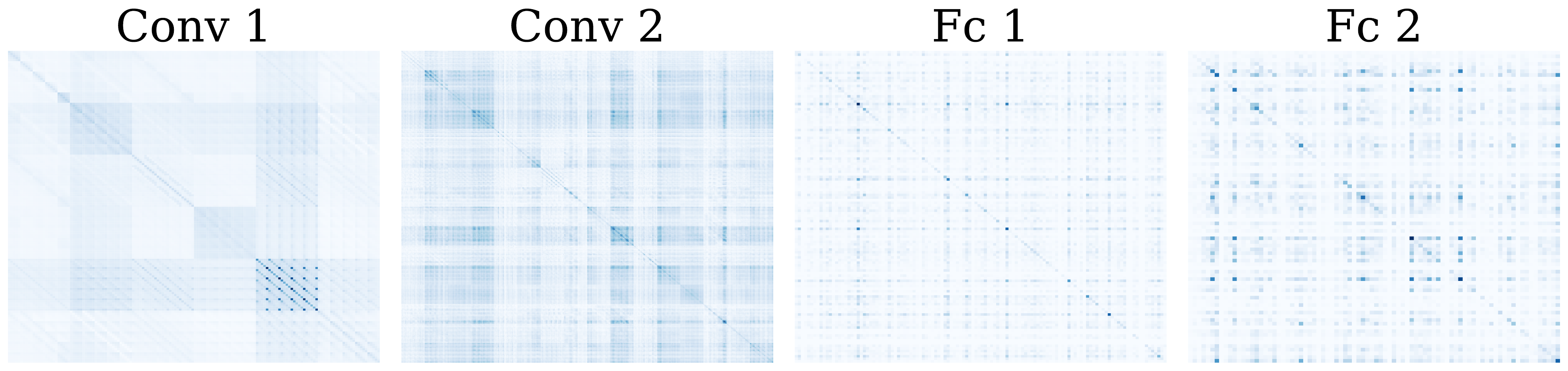}
\caption{Heatmap of the activation covariance from the hidden layers of LeNet-5 trained on CIFAR10.}
\label{fig:heatmap}
\end{figure}

Figure~\ref{fig:heatmap} presents heatmaps of the activation
covariance of each hidden layer.
Due to the use of ReLU activation function in many modern neural networks,
the activation covariance matrix is sparse, and the entries in
diagonal positions tend to have relatively larger magnitude than other
entries. 
From these observations, \adaact approximates $\mat{A}_t^{(\ell-1)}$
as a diagonal matrix --- this results in lower space complexity than
Adam -- and applies the weighted averaging.
Line~\ref{alg:v_update} computes the exponential moving average
(EMA) of the second moment of activations where both $\mat{V}_t$ and
$\widetilde{\mat{A}}_t$ belong to $\R^{(m_{\ell-1}+1)\times
(m_{\ell-1}+1)}$. While the algorithm appears to resemble Adam, it was
derived from a different perspective. Specifically, in
Line~\ref{alg:v_update} of Algorithm~\ref{alg:adaact}, \adaact
replaces the EMA of squared gradient in Adam with that of activation
variance.  
Our algorithm can be viewed as dynamically adjusting the learning
rates according to the variance of activations.

\begin{algorithm}[tb]
\caption{\adaact}
\label{alg:adaact}
\textbf{Require}: Learning rate $\eta_{t}$, Momentum $\beta_{1}=0.9,\ \beta_{2}=0.999$, Weight decay $\lambda$, Numerical stability $\epsilon$\\
\textbf{Initialize}: $\vec{\theta}_{0}$, $\mat{M}_{0} = \mat{O}$, $\mat{V}_{0} = \mat{O}$ \\
\textbf{Output}: $\vec{\theta}_{t}\in \R^{m_{\ell}(m_{\ell-1}+1)}$
\begin{algorithmic}[1] 
\FOR{$t$=1,2,3, \dots}
\STATE Draw a mini-batch $\mathcal{B}_t$ of samples.
\STATE $\circ$ EMA of activation variance:
\STATE $\widetilde{\mat{A}}_{t} = \frac{1}{|\mathcal{B}_t|}\sum_{i\in \mathcal{B}_t}{\diag(\tilde{\vec{a}}_{i}\tilde{\vec{a}}_{i}^\intercal)} \in \R^{(m_{\ell-1}+1)\times (m_{\ell-1}+1)}$
\STATE $\mat{V}_{t} = \beta_{2}\mat{V}_{t-1} +
(1-\beta_{2})\widetilde{\mat{A}}_{t}$ \label{alg:v_update}
\STATE $\widehat{\mat{V}}_{t} = \mat{V}_{t} \ / \ (1-\beta_{2}^{t})$

\STATE $\circ$ EMA of gradient:
\STATE $\mat{G}_{t} = \frac{1}{|\mathcal{B}_t|}\sum_{i\in \mathcal{B}_t}\grad{\ell}(\vec{\Theta}_t; \vec{x}_i) \in \R^{m_{\ell}\times (m_{\ell-1}+1)}$
\STATE $\mat{M}_{t} = \beta_{1}\mat{M}_{t-1} + (1-\beta_{1})\mat{G}_{t}$ \label{alg:grad_momentum}
\STATE $\widehat{\mat{M}}_{t} = \mat{M}_{t} \ / \ (1-\beta_{1}^{t})$

\STATE $\circ$ Variance adaptation:
\STATE $\widehat{\mat{G}}_{t} =
\widehat{\mat{M}}_{t}\left(\sqrt{\widehat{\mat{V}}_{t}} +
  \epsilon\mat{I} \right)^{-1}$ \label{alg:var_adapt}
\STATE $\widehat{\vec{g}}_{t} = \vect{(\widehat{\mat{G}}_{t})} \in \R^{m_{\ell}(m_{\ell-1}+1)}$
\STATE $\circ$ Update with decoupled weight decay:
\STATE $\vec{\theta}_{t} =\vec{\theta}_{t-1} - \eta_{t}\left(
  \widehat{\vec{g}}_{t} + \lambda \vec{\theta}_{t-1} \right)$ \label{alg:theta_update}
\ENDFOR
\end{algorithmic}
\end{algorithm}

Two important remarks are in order. First, existing adaptive gradient
methods maintain and adjust the learning rates
\emph{parameter-wise} while
the standard SGD uses a single global learning rate. \adaact takes a
middle ground between these two schemes and adjusts the 
learning rates \emph{neuron-wise}. 
In other words, the parameters that receive the same input features
share the same learning rate. 
While the use of parameter-wise learning rates has shown to be effective in
achieving faster convergence, it is often postulated as the main
culprit of poor generalization performance of adaptive gradient
algorithms~\cite{Wilson2017TheMV}.
Second, the FOOF algorithm also makes use of activation
covariance matrix. However, it is mainly motivated by the fact that
the activation term in KFAC, $\mat{A}_t$ in~\eqref{eq:kfac}, is
sufficient to obtain good performance, and it does not attempt to perform
variance adaptation. We empirically observed that scaling the learning
rate inversely proportional to the \emph{square root} of activation variance
is important, and removing the square root results in degraded
performance. The key features of our algorithm are described below.

\paragraph{Scaled Activation Variance.}
\adaact divides an update by
$\widehat{\mat{V}}_t^{p}$ with $p=0.5$, i.e., 
the square root of activation variance (see Line~\ref{alg:var_adapt}).
The use of square root was derived in AdaGrad by considering the
optimal step size in hindsight to minimize the regret in online learning.
Through experiments, we observed that 
$p=0.5$ achieves better performance than other values,
even better than when 
the full covariance matrix is used. See Figure~\ref{fig:choose_p}. 
\begin{figure}[tb]
    \centering
    \includegraphics[width=0.6\textwidth]{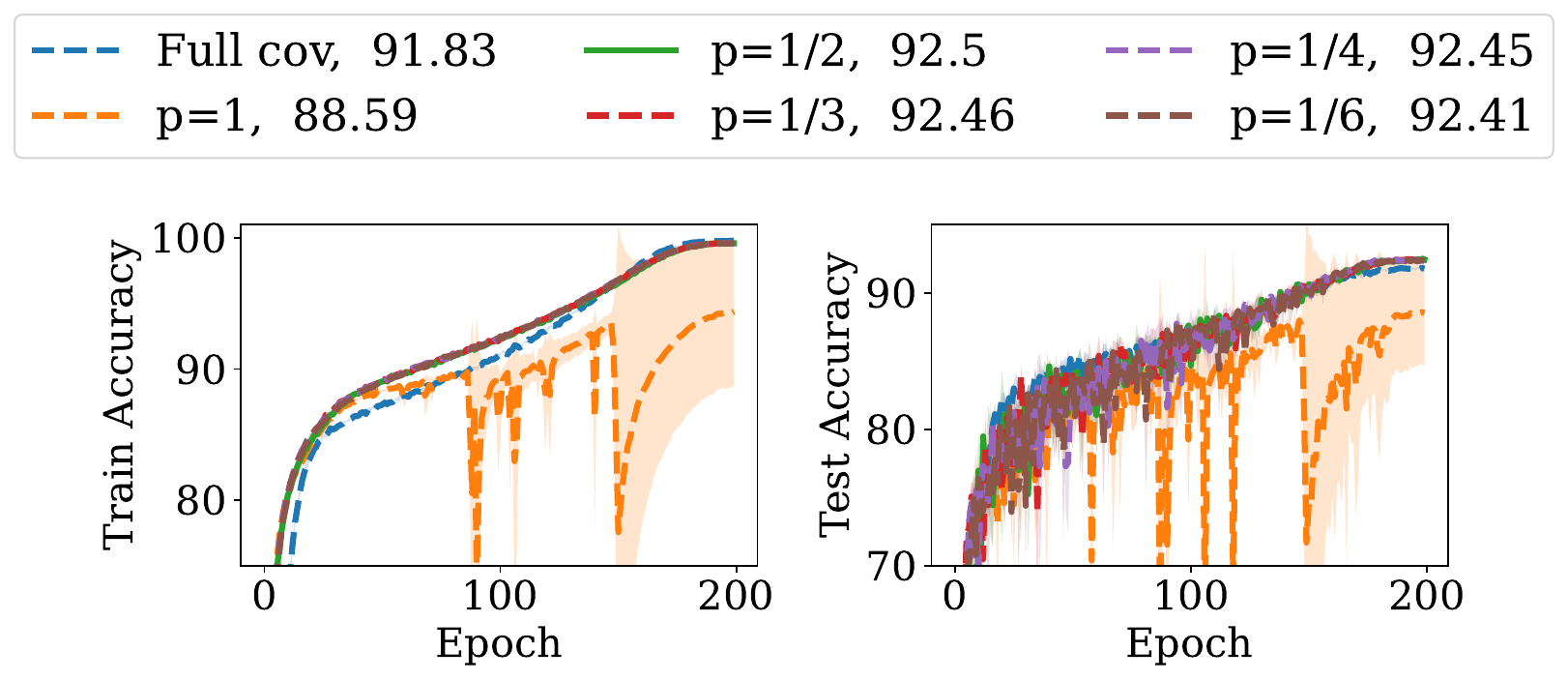}
    \caption{Train and test accuracy of ResNet-20 on CIFAR10 with varying value of $p$. Test accuracy values are indicated in the legends.}
    \label{fig:choose_p}
\end{figure}
The same was also observed in~\cite{Chen2018ClosingTG}.
When the network
exhibits high activation variance, indicating strong responses to
different inputs by individual neurons, 
\adaact uses smaller optimization
steps. Conversely, when the network has low activation variance,
suggesting consistent neuron responses to inputs, it takes larger
optimization steps. This emphasis on stable activations enhances
overall neuron stability during training, fine-tuning the
optimization process to accommodate individual neuron behavior.

\paragraph{Convolutional layer.}
In CNNs, 
activations are 4D tensors of 
shape (batch(B), channel(C), height(H), width(W)). Viewing a
convolution as a matrix-vector product, they are unfolded and
reshaped into a 2D matrix by extracting patches at each spatial
location and flattening into vectors, similar to im2col operation in
GEMM-based implementation of convolution. There are $H\times W$
spatial locations and, for each location, we have a patch flattened
into a vector of size $C\times \kappa\times \kappa$, where $\kappa$ is
the size of kernel. 
This converts the convolution operations into matrix multiplications,
enabling the application of our algorithm initially devised for fully
connected layers to convolutional layers.

\paragraph{Hyperparameters.}
Through a simple grid-based hyperparameter search, we discovered that
our algorithm performs effectively with relatively high learning rates,
typically around 0.1, while many other adaptive methods primarily use
much smaller values, e.g., 0.001.
Regarding weight decay, we adopt the decoupled weight decay~\cite{Loshchilov2019DecoupledWD} and recommend using 
the values smaller than the default 0.01 in AdamW. 
We observed
that employing a higher weight decay value makes our algorithm
converge similarly to SGD, achieving comparable generalization with
it. Conversely, using a lower weight decay value enables fast
convergence similar with  
Adam and its variants while still maintaining
improved generalization.


\section{Analysis of AdaAct}
\label{sec:convergence}
In this section, 
we analyze the convergence and generalization properties of \adaact.
For illustration purpose, we consider feed-forward networks consisting
of linear layers, but our results can also be generalized to other
types of layers.

\subsection{Convergence Analysis}
The convergence guarantee of \adaact can be established using the
framework due to~\cite{chen2018on}. 
For self-completeness, we provide a proof for the case in which the momentum factor $\beta_1$ is fixed i.e., $\beta_{1,t}=\beta_1$, for $\forall t$, in Appendix~\ref{proof:thm3.1}.
We make the following standard assumptions in stochastic optimization.

\newtheoremstyle{named}{}{}{\itshape}{}{\bfseries}{}{.5em}{\thmnote{#3 }}

\theoremstyle{named}
\newtheorem*{namedassumption}{Assumption}
\newtheorem*{namedtheorem}{Theorem}

\begin{enumerate}[wide=1pt,noitemsep,label=\textbf{A\arabic*}.,ref={A\arabic*}]
\item \label{as:A1} $f$ is differentiable and has $L$-Lipschitz gradient,
  i.e. $\forall \vec{x}, \vec{y}$,
  $||\grad{f(\vec{x})}-\grad{f(\vec{y})}||_{2} \leq
  L||\vec{x}-\vec{y}||_{2}$. It is also lower bounded,
  i.e. $f(\vec{x}^{*}) > -\infty$ where $\vec{x}^{*}$ is an optimal
  solution. 
\item \label{as:A2} At time $t$, the algorithm can access a bounded
  noisy gradient and the true gradient is bounded,
  i.e. $||\grad{f(\vec{\theta}_{t})}||_{2} \leq H,\
  ||\vec{g}_{t}||_{2} \leq H,\ \forall t > 1$.
\item \label{as:A3}
The noisy gradient is unbiased and the noise is independent,
i.e. $\vec{g}_{t}=\grad{f(\vec{\theta}_{t})} + \vec{\zeta}_{t},\
\E[\vec{\zeta}_{t}] = 0$ and $\vec{\zeta}_{i}$ is independent of
$\vec{\zeta}_{j}$ if $i\neq j$. 
\end{enumerate}
\begin{theorem}[Theorem 3.1 in~\cite{chen2018on}]
\label{thm:3.1}
Suppose that assumptions \ref{as:A1}-\ref{as:A3} are
satisfied, 
$\beta_{1,t} = \beta_1$ for $\forall t$, and
let $\gamma_t = \min_{j\in [d]}\min_{\{\vec{g}_i\}_{i=1}^t}
  \frac{\eta_t}{\sqrt{\hvec{v}_t}}$. Then we have
  \begin{align}
    \label{ineq:thm3.1}
    &\min_{t\in [T]}\E\left[\norm{\grad{f}(\vec{x}_t)}^2\right] \nonumber\\
    &\leq \Biggl\{
      \E\left[C_1\sum_{t=1}^T \norm*{\frac{\eta_1\vec{g}_t}{\sqrt{\hvec{v}_t}}}^2
      + C_2 \sum_{t=2}^T \norm*{\frac{\eta_t}{\sqrt{\hvec{v}_t}} -
      \frac{\eta_{t-1}}{\sqrt{\hvec{v}_{t-1}}}}_1 \right. \nonumber\\
    &\quad \left. + C_3 \sum_{t=2}^T\norm*{\frac{\eta_t}{\sqrt{\hvec{v}_t}} -
      \frac{\eta_{t-1}}{\sqrt{\hvec{v}_{t-1}}}}^2      
      \right] + C_4 \Biggr\}\Bigg/\sum_{t=1}^T\gamma_t\,,
  \end{align}
where $C_{1}, C_{2}, C_{3}$ are constants independent of $d$ and
$T$, $C_{4}$ is a constant independent of $T$, the expectation is
taken with respect to all the randomness corresponding to
$\{\vec{g}_{t}\}$. 
\end{theorem}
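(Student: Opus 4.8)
The plan is to recognize that \adaact is an instance of the abstract ``generalized Adam'' template analyzed in~\cite{chen2018on}, and then to reproduce their Lyapunov argument in the simpler regime $\beta_{1,t}\equiv\beta_1$. Writing the update of Line~\ref{alg:theta_update} on the vectorized parameters (folding the decoupled weight decay into $f$ and the bias-correction factors $1/(1-\beta_1^t),\,1/(1-\beta_2^t)$ into $\eta_t$) as $\vec\theta_{t+1}=\vec\theta_t-\mat{D}_t\vec m_t$ with $\vec m_t=\beta_1\vec m_{t-1}+(1-\beta_1)\vec g_t$ and $\mat{D}_t:=\diag\!\big(\eta_t/(\sqrt{\hvec{v}_t}+\epsilon)\big)$, the only structural facts used are: (i) $\mat{D}_t$ is positive definite and bounded for every realization of $\{\vec g_i\}_{i\le t}$, which follows from the $\epsilon$ in the denominator and does not depend on \adaact's particular choice of $\hvec{v}_t$; and (ii) Assumptions~\ref{as:A1}--\ref{as:A3}. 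Hence essentially nothing in the argument is specific to adapting to activation variance rather than to squared gradients, and the conclusion is exactly that of~\cite{chen2018on}.

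First I would pass to the de-momentumized iterate $\vec z_t:=\vec\theta_t+\tfrac{\beta_1}{1-\beta_1}(\vec\theta_t-\vec\theta_{t-1})$, for which a short computation (this is where fixing $\beta_{1,t}\equiv\beta_1$ keeps the bookkeeping clean) gives
\[
  \vec z_{t+1}-\vec z_t \;=\; -\,\mat{D}_t\vec g_t \;+\; \tfrac{\beta_1}{1-\beta_1}(\mat{D}_{t-1}-\mat{D}_t)\vec m_{t-1}.
\]
Applying $L$-smoothness of $f$ (Assumption~\ref{as:A1}) along the segment $[\vec z_t,\vec z_{t+1}]$ yields
\[
  f(\vec z_{t+1}) \;\le\; f(\vec z_t) + \ip{\grad f(\vec z_t),\, \vec z_{t+1}-\vec z_t} + \tfrac{L}{2}\norm{\vec z_{t+1}-\vec z_t}^2 .
\]
I would then split $\ip{\grad f(\vec z_t),\, \vec z_{t+1}-\vec z_t}$ by inserting $\grad f(\vec\theta_t)$: the leading term is $-\ip{\grad f(\vec\theta_t),\, \mat{D}_t\vec g_t}$; the term $\ip{\grad f(\vec z_t)-\grad f(\vec\theta_t),\, \vec z_{t+1}-\vec z_t}$ is controlled through $\norm{\vec z_t-\vec\theta_t}=\tfrac{\beta_1}{1-\beta_1}\norm{\vec\theta_t-\vec\theta_{t-1}}$ and the bounded-gradient Assumption~\ref{as:A2}; and the $(\mat{D}_{t-1}-\mat{D}_t)\vec m_{t-1}$ piece contributes a telescoping-in-$\mat{D}$ quantity plus a remainder bounded via the same assumption.

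The crux is the conditional-expectation step. Conditioning on the $\sigma$-field $\mathcal{F}_t$ generated by $\{\vec g_i\}_{i<t}$, Assumption~\ref{as:A3} gives $\E[\vec g_t\mid\mathcal{F}_t]=\grad f(\vec\theta_t)$, but $\mat{D}_t$ is \emph{not} $\mathcal{F}_t$-measurable (it depends on the current minibatch), so $\E\!\big[\ip{\grad f(\vec\theta_t),\, \mat{D}_t\vec g_t}\mid\mathcal{F}_t\big]$ cannot be rewritten as $\ip{\grad f(\vec\theta_t),\, \mat{D}_t\grad f(\vec\theta_t)}$. The standard remedy---and the only genuinely delicate point---is to lower-bound this inner product by the worst case of the preconditioner over coordinates and over all admissible gradient histories, which is precisely why $\gamma_t=\min_{j\in[d]}\min_{\{\vec g_i\}_{i=1}^t}\eta_t/\sqrt{\hvec{v}_t}$ enters; after this one obtains a $\gamma_t\norm{\grad f(\vec\theta_t)}^2$ term up to a remainder absorbed via Assumption~\ref{as:A2}. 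Summing the one-step inequality over $t=1,\dots,T$, telescoping the $f(\vec z_t)$ terms and using $f(\vec x^{\ast})>-\infty$ (Assumption~\ref{as:A1}) to drop $f(\vec z_{T+1})$, one collects the residuals into $C_1\sum_t\norm*{\eta_1\vec g_t/\sqrt{\hvec{v}_t}}^2$, $C_2\sum_t\norm*{\eta_t/\sqrt{\hvec{v}_t}-\eta_{t-1}/\sqrt{\hvec{v}_{t-1}}}_1$, its square with constant $C_3$ (all independent of $d$ and $T$), and a $T$-independent $C_4$; dividing through by $\sum_{t=1}^T\gamma_t$ and using $\min_{t\in[T]}\E\norm{\grad f(\vec\theta_t)}^2\le\big(\sum_{t=1}^T\gamma_t\big)^{-1}\sum_{t=1}^T\gamma_t\,\E\norm{\grad f(\vec\theta_t)}^2$ yields~\eqref{ineq:thm3.1}. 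I expect the arithmetic of regrouping the $\mat{D}_{t-1}-\mat{D}_t$ cross terms and the $\tfrac{L}{2}\norm{\vec z_{t+1}-\vec z_t}^2$ term into the constants $C_1,\dots,C_4$ to be the most tedious part, while the conceptual content is entirely that of~\cite{chen2018on}.
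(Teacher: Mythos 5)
Your proposal is correct and follows essentially the same route as the paper's Appendix~A proof: the auxiliary iterate $\vec z_t=\vec\theta_t+\tfrac{\beta_1}{1-\beta_1}(\vec\theta_t-\vec\theta_{t-1})$ (Lemma~\ref{lem:zseq_diff}), the descent-lemma decomposition into the four error terms (Lemma~\ref{lem:opt_gap}), the handling of the non-measurable preconditioner by inserting the previous step's $\eta_{t-1}/\sqrt{\hvec{v}_{t-1}}$ so the martingale term vanishes, and the final lower bound via $\gamma_t$ followed by telescoping and division by $\sum_t\gamma_t$. The only differences are presentational (you phrase the update with a diagonal matrix $\mat D_t$ and emphasize the conditioning argument explicitly), not substantive.
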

To compute the convergence rate for \adaact, we make the following
additional assumptions. 
%
%
\begin{enumerate}[wide=1pt,noitemsep,label=\textbf{A\arabic*}.,start=4,ref={A\arabic*}]
\item \label{as:1} Activation variances are bounded, i.e. there exist constants $c_{L},\
c_{U} > 0$ such that $c_{L} \leq (\tilde{\vec{a}}_{t}^{2}\otimes\vec{1})_{i}
\leq c_{U},\ \forall t > 1,$ and $\forall i \in [d]$ where
$d=m_{\ell}(m_{\ell-1}+1)$. 
\item\label{as:2} For $\eta_{t}\leq \eta_{t-1}$, there exists $t_0 > 0$ such that
$\sqrt{\widehat{\vec{v}}_{t-1}/\widehat{\vec{v}}_{t}}\leq \eta_{t-1}/\eta_{t}$
for $t\geq t_0$. 
\end{enumerate}

\begin{figure}[tb]
\centering
\begin{minipage}{0.48\textwidth}
  \centering
  \includegraphics[width=\linewidth]{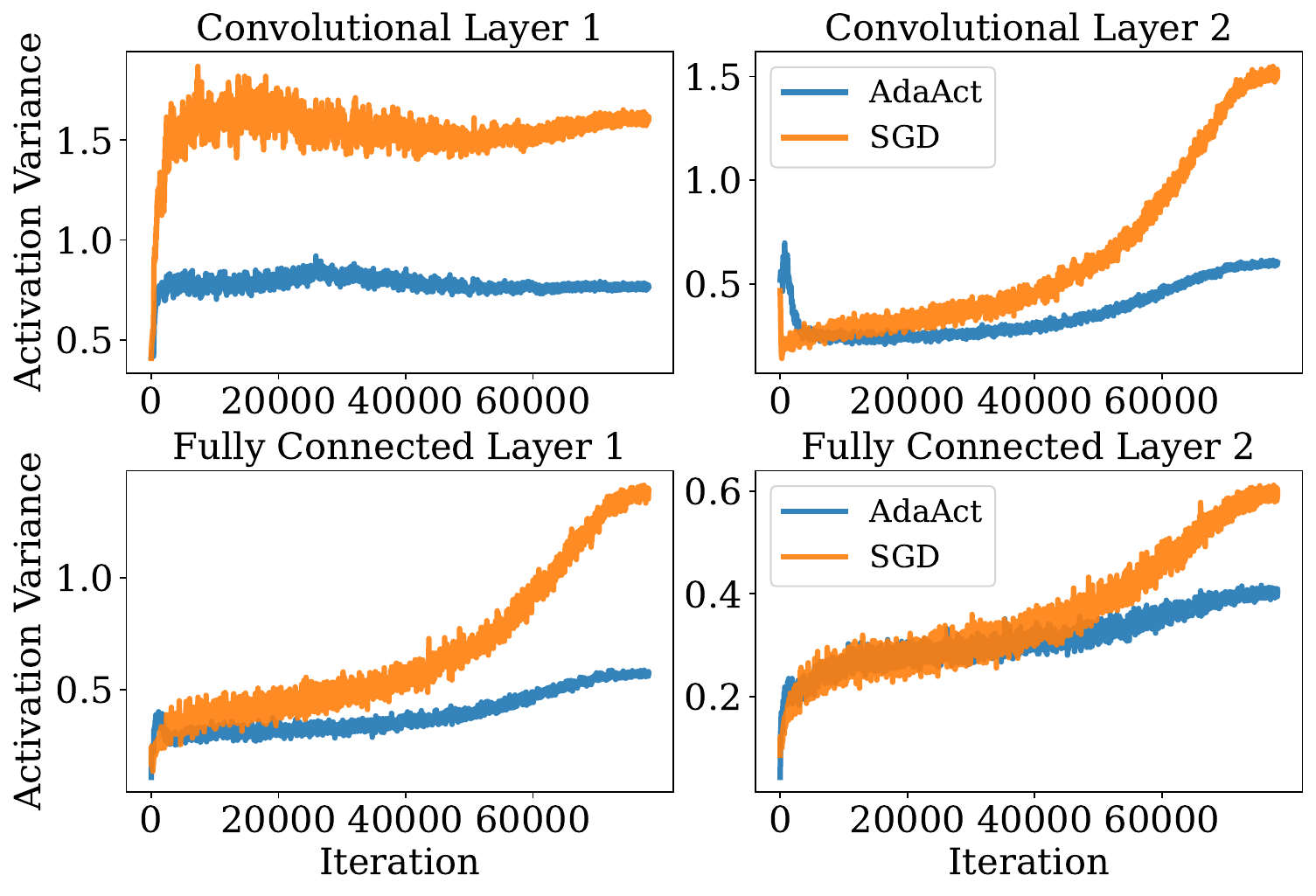}
  \caption{Activation variance resulted from training LeNet-5 on CIFAR10}
  \label{fig:cifar10_lenet5_actvar}
\end{minipage}
\hfill
\begin{minipage}{0.48\textwidth}
  \centering
  \includegraphics[width=\linewidth]{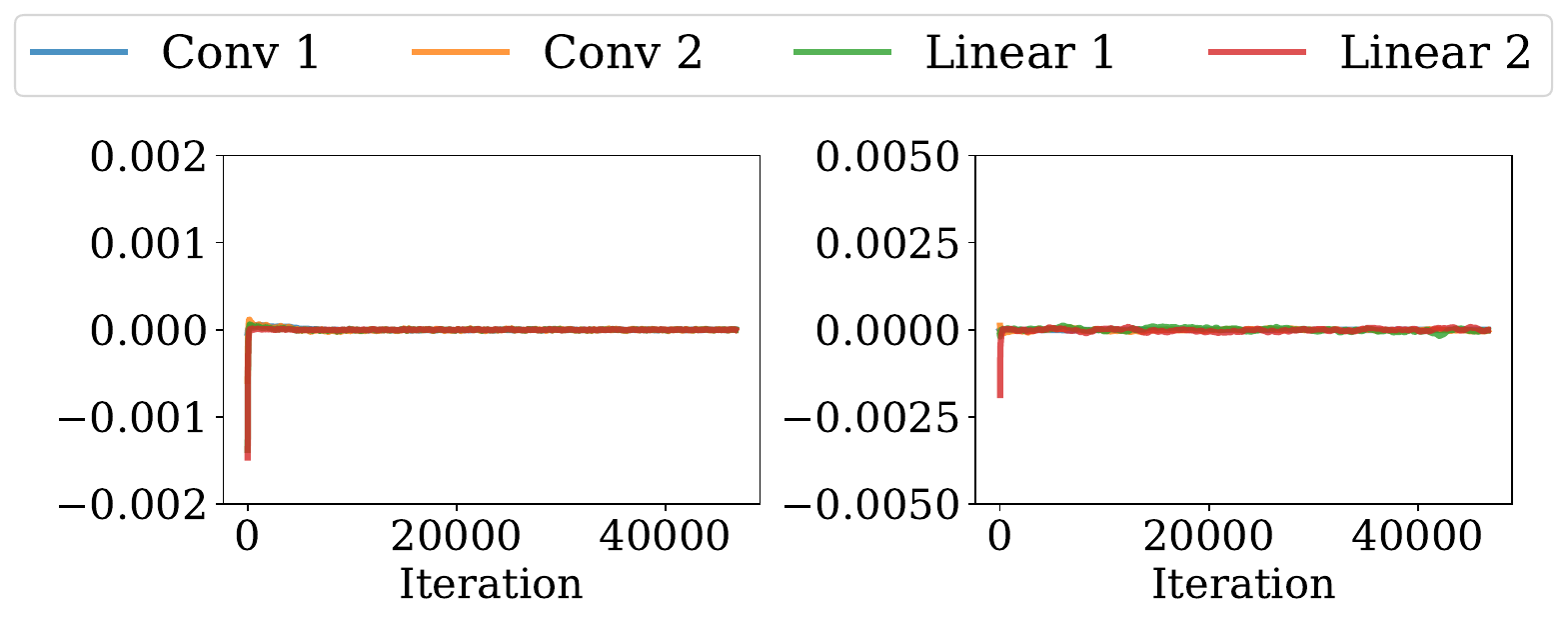}
  \caption{Difference in effective stepsizes: (Left) between iterations $\eta_{t} / \sqrt{\hat{v}_{t}} - \eta_{t-1} / \sqrt{\hat{v}_{t-1}}$, and (Right) between neighboring datasets $\eta / \sqrt{v_{t}} - \eta / \sqrt{v_{t}^{\prime}}$.}
  \label{fig:effstep_diff}
\end{minipage}
\end{figure}

One way to satisfy Assumption~\ref{as:1} is to clip the estimated
activation variances 
$\mathsf{Clip}(\vec{a}_t^2, c_L, c_U)$
to ensure the variances are in $[c_L, c_U]$. However,
we empirically observed that there exist natural lower and upper bound as
\adaact promotes stabilized activations. We trained LeNet-5 on CIFAR10 for 200 epochs to observe the trend of activation variance over iterations. Figure~\ref{fig:cifar10_lenet5_actvar} presents the activation variances across all hidden layers in the architecture. We observe that the activations from layers are bounded.

%
Assumption~\ref{as:2} posits that the
effective learning rates do not increase after a specific iteration
$t_0$. This condition aligns mildly with the inherent behavior of
adaptive methods such as AdaGrad and AMSGrad. 
Figure~\ref{fig:effstep_diff}, generated using LeNet-5 on Fashion MNIST~\cite{xiao2017fashionmnist} illustrates in the left side that 
$\left(\eta_{t}/\sqrt{\widehat{\vec{v}}_t}-\eta_{t-1}/\sqrt{\widehat{\vec{v}}_{t-1}}\right) \to 0$, supporting the validity of Assumption~\ref{as:2}.
Assuming that the assumptions \hyperref[as:A1]{A1}-\hyperref[as:A3]{A3} and
Theorem~\ref{thm:3.1} 
are satisfied, we present the 
following results. 

\begin{corollary}
\label{cor:1}
If Assumption~\ref{as:1} and ~\ref{as:2} hold, for $\beta_{1} \in [0,
1)$ and $\eta_{t}=1/\sqrt{t}$, \adaact satisfies  
\begin{equation}
\label{eq:convergence}
    \underset{t\in[T]}{\min} \E\left[ ||f(\vec{\theta}_{t})||^{2}\right] \leq \frac{1}{\sqrt{T}}\left( Q_{1}+Q_{2}\log{T} \right) 
\end{equation} 
for any $T$, where 
$Q_{1}=\frac{c_{U}}{c_{L}^{2}}\left(C_{1}H^{2}+t_{0}d\left(c_{L}C_{2}+C_{3}\right)\right)+C_{4}$
and $Q_{2}=\frac{c_{U}C_{1}H^{2}}{c_{L}^{2}}$
are two constants independent of $T$.
\end{corollary}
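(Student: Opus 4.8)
The plan is to instantiate Theorem~\ref{thm:3.1} with \adaact's preconditioner, i.e.\ take $\hvec{v}_t$ to be the vectorized, bias-corrected diagonal second-moment estimate $\widehat{\mat{V}}_t$ of Line~\ref{alg:v_update}, and then bound the four quantities in the numerator of~\eqref{ineq:thm3.1} together with the denominator $\sum_{t=1}^T\gamma_t$. The key preliminary fact is that $\hvec{v}_t$ inherits the bounds of Assumption~\ref{as:1}: since $\widehat{\mat{V}}_t=\big(\sum_{k=1}^t\beta_2^{t-k}\widetilde{\mat{A}}_k\big)\big/\big(\sum_{k=1}^t\beta_2^{t-k}\big)$ is a convex combination of the per-batch activation second moments $\widetilde{\mat{A}}_k$, each of whose entries lies in $[c_L,c_U]$ by~\ref{as:1}, every coordinate of $\hvec{v}_t$ lies in $[c_L,c_U]$; hence $\sqrt{c_L}\le(\sqrt{\hvec{v}_t})_i\le\sqrt{c_U}$ for all $t\ge1$ and $i\in[d]$ (the stabilizer $\epsilon$ only shifts the upper end and is absorbed into constants).

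For the first term I would use~\ref{as:A2}, which gives $\norm{\vec{g}_t}^2\le H^2$, together with $(\hvec{v}_t)_i\ge c_L$, so that each summand is at most $\eta_t^2 H^2/c_L$; with $\eta_t=1/\sqrt{t}$ one has $\sum_{t=1}^T\eta_t^2=\sum_{t=1}^T 1/t\le 1+\log T$, hence $C_1\sum_{t=1}^T\norm{\eta_t\vec{g}_t/\sqrt{\hvec{v}_t}}^2\le (C_1H^2/c_L)(1+\log T)$. This is exactly the term that, after division by $\sum_t\gamma_t$, produces the $Q_2\log T$ summand and a $T$-independent contribution to $Q_1$.

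For the second and third terms I would split each sum at $t_0$ and invoke Assumption~\ref{as:2}. Since $\eta_t=1/\sqrt{t}$ is decreasing, for $t\ge t_0$ Assumption~\ref{as:2} yields $\sqrt{\hvec{v}_{t-1}/\hvec{v}_t}\le\eta_{t-1}/\eta_t$ coordinatewise, i.e.\ $\eta_t/\sqrt{(\hvec{v}_t)_i}\le\eta_{t-1}/\sqrt{(\hvec{v}_{t-1})_i}$, so the increments $\eta_{t-1}/\sqrt{(\hvec{v}_{t-1})_i}-\eta_t/\sqrt{(\hvec{v}_t)_i}$ are nonnegative and the $\ell_1$-sum over $t\ge t_0$ telescopes to at most $\sum_i\eta_{t_0-1}/\sqrt{(\hvec{v}_{t_0-1})_i}\le d/\sqrt{c_L}$; bounding each squared increment by $(1/\sqrt{c_L})$ times the increment, the same telescoping gives at most $d/c_L$ for the $\ell_2^2$-sum over $t\ge t_0$. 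The at most $t_0$ remaining terms with $t<t_0$ are bounded crudely: each coordinate of $\eta_t/\sqrt{\hvec{v}_t}$ lies in $[\eta_t/\sqrt{c_U},\,\eta_t/\sqrt{c_L}]\subseteq[0,1/\sqrt{c_L}]$, so each $\ell_1$-difference is $\le 2d/\sqrt{c_L}$ and each squared difference is $\le d/c_L$, contributing $\mathcal{O}(t_0 d/\sqrt{c_L})$ and $\mathcal{O}(t_0 d/c_L)$. The constant $C_4$ passes through unchanged.

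Finally, for the denominator, $(\hvec{v}_t)_j\le c_U$ gives $\gamma_t\ge\eta_t/\sqrt{c_U}=1/\sqrt{c_U t}$, whence $\sum_{t=1}^T\gamma_t\ge c_U^{-1/2}\sum_{t=1}^T t^{-1/2}\ge\sqrt{T/c_U}$ (using $t^{-1/2}\ge T^{-1/2}$ for $t\le T$). Dividing the numerator bound by $\sum_t\gamma_t$ then gives a bound of the form $\tfrac{1}{\sqrt{T}}(Q_1+Q_2\log T)$, the $\log T$ coefficient coming from the first term and $Q_1$ collecting the remaining $c_L,c_U,t_0,d,H,C_1,\dots,C_4$ factors; the precise expressions $Q_1=\tfrac{c_U}{c_L^2}\big(C_1H^2+t_0 d(c_LC_2+C_3)\big)+C_4$ and $Q_2=\tfrac{c_UC_1H^2}{c_L^2}$ follow by absorbing the loose factors using $c_U\ge c_L$ (so, e.g., $\sqrt{c_U/c_L}\le c_U/c_L$). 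I expect the main obstacle to be the careful chaining of Assumption~\ref{as:2} into coordinatewise monotonicity of the effective step sizes so that the telescoping of terms two and three is legitimate, together with correctly propagating the $[c_L,c_U]$ bound (and the role of $\epsilon$) through the bias-corrected EMA; matching the stated constants exactly is then routine bookkeeping.
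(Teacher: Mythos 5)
Your proposal follows essentially the same route as the paper's proof: instantiate Theorem~\ref{thm:3.1}, bound the $C_1$ term via \ref{as:A2} and the harmonic sum $\sum_t 1/t \le 1+\log T$, split the $C_2$ and $C_3$ sums at $t_0$ and telescope the post-$t_0$ part using Assumption~\ref{as:2}, and lower-bound $\sum_t\gamma_t$ by $\Omega(\sqrt{T})$ via the upper bound on the activation variance. The only divergence is bookkeeping: you carry $\sqrt{c_L},\sqrt{c_U}$ where the paper (somewhat loosely) treats $\sqrt{\hvec{v}_t}$ itself as lying in $[c_L,c_U]$, which is exactly where its stated constants $Q_1,Q_2$ come from; your absorption step ``$\sqrt{c_U}/c_L\le c_U/c_L^2$'' needs $c_L\le\sqrt{c_U}$ rather than just $c_L\le c_U$, but this affects only the constants, not the $\mathcal{O}(\log T/\sqrt{T})$ rate.
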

\noindent The result in~\eqref{eq:convergence}, in fact, indicates that \adaact
can achieve the same convergence rate $\mathcal{O}(\log{T}/\sqrt{T})$
as AMSGrad. See Appendix~\ref{proof:cor1} for the proof.

\subsection{Generalization Analysis}
We bound the generalization error of \adaact using the result
of~\cite{hardt2016Train} on a connection between the generalization 
error and stability.
Let $S = (z_1, \ldots, z_N)$ be a set of $N$ i.i.d. samples drawn from 
$\mathcal{D}$. The generalization error of model trained on $S$ using
the randomized algorithm $\mathcal{A}$ is defined as
\[
  \epsilon_{\text{gen}} := \E_{S, \mathcal{A}}
  \left[\mathcal{R}_S(\mathcal{A}(S)) -
    \mathcal{R}(\mathcal{A}(S))\right]\,, 
\]
where $\mathcal{R}_S$ and $\mathcal{R}$ denote the empirical and
population risk, respectively.
\begin{definition}[\cite{hardt2016Train}]
  A randomized algorithm $\mathcal{A}$ is $\epsilon$-uniformly stable
  if for all pairs of datasets $S, S'$ that differ in at most one
  example,
  \[
    \sup_\xi \E_{\mathcal{A}}\left[f(\mathcal{A}(S); \xi) -
      f(\mathcal{A}(S'); \xi)\right] \leq \epsilon\,.
  \]
\end{definition}
\begin{theorem}[\cite{hardt2016Train}] \label{thm:stable_to_gen}
  Let $\mathcal{A}$ be an $\epsilon$-uniformly stable algorithm. Then
  we have $\abs{\epsilon_{\text{gen}}}\leq \epsilon$.
\end{theorem}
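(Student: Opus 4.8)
The plan is to prove this via the classical ``ghost sample'' swapping argument of~\cite{hardt2016Train}. First I would introduce an independent copy $S' = (z_1', \ldots, z_N')$ of the training set, also drawn i.i.d.\ from $\mathcal{D}$ and independent of the internal randomness of $\mathcal{A}$, and for each $i \in [N]$ define the hybrid dataset $S^{(i)} = (z_1, \ldots, z_{i-1}, z_i', z_{i+1}, \ldots, z_N)$, which differs from $S$ in exactly the $i$-th coordinate. The goal is to rewrite $\epsilon_{\text{gen}}$ as an average of quantities to which the uniform stability bound applies directly.

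Next I would rewrite the population risk. Since $z_i'$ is a fresh draw from $\mathcal{D}$ independent of $S$ and of $\mathcal{A}$'s coins, $\E[f(\mathcal{A}(S); z_i')] = \E_{S, \mathcal{A}}[\mathcal{R}(\mathcal{A}(S))]$ for every $i$, so averaging over $i$ gives $\E_{S, \mathcal{A}}[\mathcal{R}(\mathcal{A}(S))] = \frac{1}{N}\sum_{i=1}^N \E[f(\mathcal{A}(S); z_i')]$. The empirical risk is already in matching form, $\E_{S, \mathcal{A}}[\mathcal{R}_S(\mathcal{A}(S))] = \frac{1}{N}\sum_{i=1}^N \E[f(\mathcal{A}(S); z_i)]$. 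The crucial step is then a relabeling identity: because $z_i$ and $z_i'$ are i.i.d.\ and the joint law of $(S, S', \text{coins of }\mathcal{A})$ is invariant under exchanging $z_i \leftrightarrow z_i'$, we have $\E[f(\mathcal{A}(S); z_i)] = \E[f(\mathcal{A}(S^{(i)}); z_i')]$. Subtracting the two expressions yields $\epsilon_{\text{gen}} = \frac{1}{N}\sum_{i=1}^N \E\bigl[f(\mathcal{A}(S^{(i)}); z_i') - f(\mathcal{A}(S); z_i')\bigr]$.

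Finally I would invoke $\epsilon$-uniform stability. For each fixed $i$ and each realization of $(S, S')$, the datasets $S$ and $S^{(i)}$ differ in at most one example, so applying the definition with $\xi = z_i'$ (using both orderings of the pair) gives $\bigl|\E_{\mathcal{A}}[f(\mathcal{A}(S^{(i)}); z_i') - f(\mathcal{A}(S); z_i')]\bigr| \leq \epsilon$. Taking absolute values, moving them inside the sum and the outer expectation over $(S, S')$, and averaging over $i$ gives $|\epsilon_{\text{gen}}| \leq \epsilon$. The only genuinely delicate point is the relabeling identity in the middle step: one must be careful that $\mathcal{A}$'s internal randomness is independent of the data, so that the swap $z_i \leftrightarrow z_i'$ is a measure-preserving bijection of the underlying probability space and the conditioning on $\mathcal{A}$'s coins is unaffected. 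Everything else is bookkeeping with linearity of expectation and the i.i.d.\ assumption.
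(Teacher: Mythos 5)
Your argument is correct, and it is precisely the ghost-sample proof of Theorem~2.2 in Hardt, Recht, and Singer (2016). Note, however, that the paper itself offers no proof of this statement: Theorem~\ref{thm:stable_to_gen} is imported verbatim from~\cite{hardt2016Train}, and the appendix only proves the results that are new to this work (Corollary~\ref{cor:1} and Theorem~\ref{thm:stability}). So there is nothing in the paper to compare against; your reconstruction stands on its own. The two points you flag as delicate are exactly the right ones: the relabeling identity $\E[f(\mathcal{A}(S); z_i)] = \E[f(\mathcal{A}(S^{(i)}); z_i')]$ requires that $\mathcal{A}$'s internal randomness be independent of the data so that swapping $z_i \leftrightarrow z_i'$ is measure-preserving, and the two-sided bound $\abs{\epsilon_{\text{gen}}}\leq \epsilon$ needs the stability definition applied with the roles of the two neighboring datasets exchanged, since as stated it is only a one-sided bound for each ordered pair. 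Both are handled correctly in your write-up.
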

Theorem~\ref{thm:stable_to_gen} states that it suffices to prove that \adaact is
$\epsilon$-uniformly stable to bound its generalization error
$\epsilon_{\text{gen}}$. Since the assumption~\ref{as:A2} implies that
the loss function $f$ is $H$-Lipschitz, it remains to show
$\E_{\mathcal{A}}\left[ \norm*{\vec{\theta}_t -
    \vec{\theta}_t'}_2\right]$ is bounded. Then we have
$\sup_\xi \E_{\mathcal{A}}\left[f(\mathcal{A}(S); \xi) -
  f(\mathcal{A}(S'); \xi)\right] \leq H \E_{\mathcal{A}}\left[ \norm*{\vec{\theta}_t -
    \vec{\theta}_t'}_2\right]$.
\begin{theorem} \label{thm:stability}
Let $\vec{\theta}_t$ (or $\vec{\theta}_t'$) be the parameter vector of
model after being trained on $S$ (or $S'$) for $t$ iterations using
\adaact with fixed learning rate $\eta$. Define $\Delta_t:= \norm{\vec{\theta}_t
  -\vec{\theta}_t'}_2$. Then we have
\begin{align*}
  \E\left[ \Delta_{T+1}\right]
  &\leq  \frac{\eta H(N-1)}{N}
    \sum_{t=1}^T\underbracket{\E{\norm*{\frac{1}{\sqrt{\hvec{v}_t}}
    - \frac{1}{\sqrt{\hvec{v}_t'}}}_2}}_{A}\\
  &
    +
    \frac{\eta
    L}{c_L}\sum_{t=1}^T\underbracket{\E\left[\sum_{k=1}^t\beta_1^{t-k}(1-\beta_1)\Delta_k\right]}_{B} 
+ \frac{2\eta HT}{Nc_L}\,.
\end{align*}
\end{theorem}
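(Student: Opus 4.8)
The plan is to reduce the claim to a one-step perturbation recursion for $\E[\Delta_t]$ and then telescope. I take the weight-decay parameter to be $\lambda=0$ (when $\lambda>0$ each step merely contributes an extra harmless factor $1-\eta\lambda\le1$), couple the two runs so that at every iteration they draw the \emph{same} minibatch indices, and use the shared initialization $\vec{\theta}_0=\vec{\theta}_0'$, so $\Delta_0=0$. Writing the update of Line~\ref{alg:theta_update} as $\vec{\theta}_t=\vec{\theta}_{t-1}-\eta\,\hvec{g}_t$ with $\hvec{g}_t=\widehat{\vec{m}}_t\odot \tfrac{1}{\sqrt{\hvec{v}_t}}$ (the $\epsilon$ is absorbed into $c_L$), subtracting the two trajectories and using the triangle inequality gives $\Delta_t\le\Delta_{t-1}+\eta\,\|\hvec{g}_t-\hvec{g}_t'\|_2$. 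Everything then reduces to bounding $\E\|\hvec{g}_t-\hvec{g}_t'\|_2$.

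The second step is to add and subtract $\widehat{\vec{m}}_t\odot\tfrac{1}{\sqrt{\hvec{v}_t'}}$, separating a \emph{preconditioner-difference} part from a \emph{momentum-difference} part:
\[
  \hvec{g}_t-\hvec{g}_t'=\widehat{\vec{m}}_t\odot\Bigl(\tfrac{1}{\sqrt{\hvec{v}_t}}-\tfrac{1}{\sqrt{\hvec{v}_t'}}\Bigr)+\bigl(\widehat{\vec{m}}_t-\widehat{\vec{m}}_t'\bigr)\odot\tfrac{1}{\sqrt{\hvec{v}_t'}}.
\]
Since $\widehat{\vec{m}}_t$ is, after bias correction, a convex combination of minibatch gradients, Assumption~\ref{as:A2} gives $\|\widehat{\vec{m}}_t\|_\infty\le\|\widehat{\vec{m}}_t\|_2\le H$, so the first part is at most $H\,\bigl\|\tfrac{1}{\sqrt{\hvec{v}_t}}-\tfrac{1}{\sqrt{\hvec{v}_t'}}\bigr\|_2$, which is exactly the quantity $A$. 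To obtain the factor $(N-1)/N$ in front of it, I condition on whether step $t$'s minibatch contains the one example in which $S$ and $S'$ differ: this event has probability at most $1/N$, and on it I bound $\|\hvec{g}_t-\hvec{g}_t'\|_2\le\|\hvec{g}_t\|_2+\|\hvec{g}_t'\|_2\le 2H/\sqrt{c_L}$ via Assumption~\ref{as:1}; summing these rare events over $t$ yields the $\tfrac{2\eta HT}{Nc_L}$ term, while on the complementary probability-$(N-1)/N$ event the preconditioner part keeps the bound above.

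For the momentum-difference part I use $\bigl\|\tfrac{1}{\sqrt{\hvec{v}_t'}}\bigr\|_\infty\le 1/\sqrt{c_L}$ (Assumption~\ref{as:1}) and unroll $\widehat{\vec{m}}_t-\widehat{\vec{m}}_t'=\tfrac{1}{1-\beta_1^t}\sum_{k=1}^t\beta_1^{t-k}(1-\beta_1)(\vec{g}_k-\vec{g}_k')$. On the event that step $k$'s minibatch avoids the differing example, $\vec{g}_k-\vec{g}_k'$ is a difference of gradients of the \emph{same} loss at $\vec{\theta}_{k-1}$ versus $\vec{\theta}_{k-1}'$, hence at most $L\,\Delta_{k-1}$ by Assumption~\ref{as:A1}; on the complementary event it is at most $2H$ by Assumption~\ref{as:A2}. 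Taking expectations, bounding $(N-1)/N\le1$ here and $\tfrac{1}{1-\beta_1^t}\le\tfrac{1}{1-\beta_1}$ (these constants get absorbed), shifting the index $k-1\mapsto k$ and using $\Delta_0=0$, this part is controlled by $\tfrac{\eta L}{c_L}\,\E\bigl[\sum_{k=1}^t\beta_1^{t-k}(1-\beta_1)\Delta_k\bigr]$, i.e.\ the quantity $B$, plus a further $O(1/N)$ piece folded into the last term.

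Finally I telescope $\Delta_t\le\Delta_{t-1}+\eta\,\|\hvec{g}_t-\hvec{g}_t'\|_2$ over the $T$ iterations with $\Delta_0=0$ and substitute the three bounds, giving the stated inequality. The main obstacle is the probabilistic bookkeeping: $\hvec{v}_t$ depends simultaneously on the sampled examples and on the current iterate (through the activations of the preceding layers), so the conditioning that isolates the $(N-1)/N$ weight on $A$ and pushes all ``differing-example'' contributions into the $O(T/N)$ term must be carried out step by step with the tower property over the minibatch-selection $\sigma$-algebra, all while leaving $A$ and $B$ in the un-summed, genuinely data-dependent form in which the theorem states them — their further control is exactly what Assumptions~\ref{as:1}--\ref{as:2} and the empirical observations in Figure~\ref{fig:effstep_diff} are meant to supply. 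A secondary nuisance is keeping the bias-correction factors $1/(1-\beta_1^t)$ from leaking into the constants, handled by the uniform bound $1/(1-\beta_1)$ (equivalently, by carrying $\mat{M}_t$ rather than $\widehat{\mat{M}}_t$ through the estimate).
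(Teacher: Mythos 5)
Your proposal is correct and follows essentially the same route as the paper's proof: unrolling the updates (your one-step recursion telescoped is the paper's direct expansion of $\vec{\theta}_{T+1}-\vec{\theta}_{T+1}'$), splitting into a preconditioner-difference part and a momentum/gradient-difference part, conditioning on whether the differing example is sampled to obtain the $(N-1)/N$ and $2\eta HT/(Nc_L)$ pieces, and invoking the Lipschitz gradient and the lower bound $c_L$ on the activation variance. The only cosmetic differences are that you decompose before unrolling the momentum while the paper unrolls first and decomposes each summand, and that you carry the bias-correction factor explicitly whereas the paper works directly with $\vec{m}_t$.
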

As shown in Figure~\ref{fig:effstep_diff},
the term $A$ in 
Theorem~\ref{thm:stability} is small enough (almost zero across
iterations). The term $B$ is the EMA of $\Delta_t$ and the last term
is small for datasets of moderate size. The generalization analysis
demonstrates that \adaact maintains a bounded generalization error,
attributable to its $\epsilon$-uniform stability and the Lipschitz
continuity of the loss function. See Appendix~\ref{proof:stability} for the proof.


\section{Experiments}
\label{sec:experiment}
In this section, we evaluate \adaact's performance on the standard
image classification task and compare it with other baselines.  
For comparisons, we  
trained ResNet~\cite{he2016deep}, DenseNet~\cite{Huang2016DenselyCC},
and Vision Transformer (ViT)~\cite{dosovitskiy2021an} 
on standard benchmark datasets: 
CIFAR10, CIFAR100, and ImageNet (ILSVRC 2012)~\cite{deng2009imagenet}.
All experiments were performed using Nvidia Geforce RTX 3090 GPUs. 

\begin{figure}[tb]
    \centering
    \includegraphics[width=0.5\textwidth]{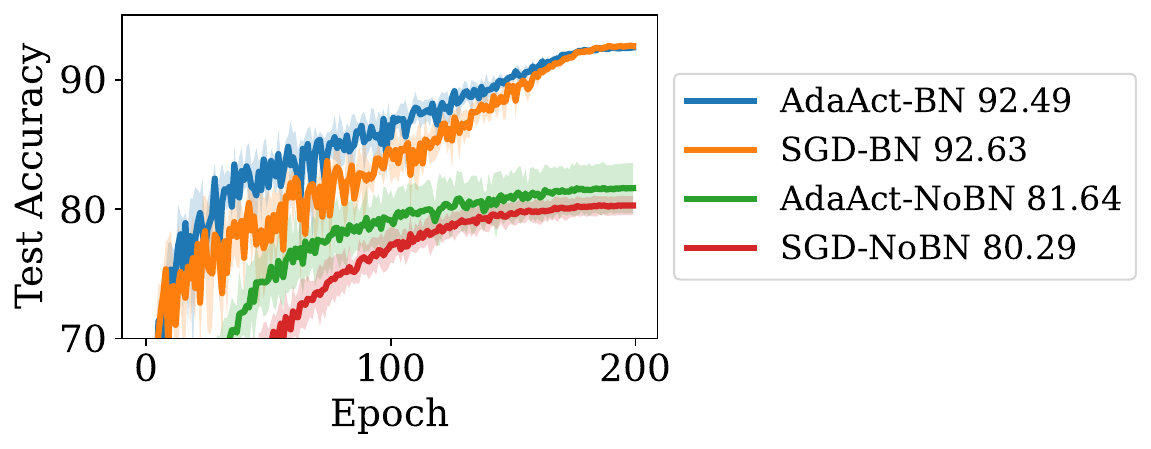}
    \caption{Test accuracy of ResNet-20 on CIFAR10: \adaact
      generalizes better than SGD in the absence of batch
      normalization. }
    \label{fig:bn_removed}
\end{figure}

\subsection{CIFAR Training Results}
\paragraph{Training Settings.} We follow the
settings for training the CIFAR datasets in 
\cite{Luo2019AdaBound} and \cite{zhuang2020adabelief}.
Each network is trained for 200 epochs using the minibatch size of 128
with learning rate decayed according to the cosine annealing schedule.
We used ResNet-20 and ResNet-32 for training CIFAR10 and
ResNet-34 and DenseNet-121 for CIFAR100, and ran the experiments 
5 times and report the mean and standard error for test accuracy
to evaluate the 
generalization performance.  
We included state-of-the-art first- and second-order methods as
baselines. Specifically, we chose SGD as a representative method for
the class of first-order methods, Adam, AdamW, and
Adan~\cite{xie2023adan} for the class of first-order 
adaptive methods, and KFAC and FOOF for the class of second-order methods.
We conducted mild hyperparameter tuning specifically for Adan, FOOF,
and KFAC. We varied the learning rate from 0.001 to 1.0, explored
momentum and EMA coefficient values of 0.9, 0.95, and 0.99, and
adjusted the damping factor between 0.01 and 10. For the remaining
methods, we used the same settings as described in
\cite{zhuang2020adabelief}.

\begin{figure}[tb]
    \centering
    \includegraphics[width=1\textwidth]{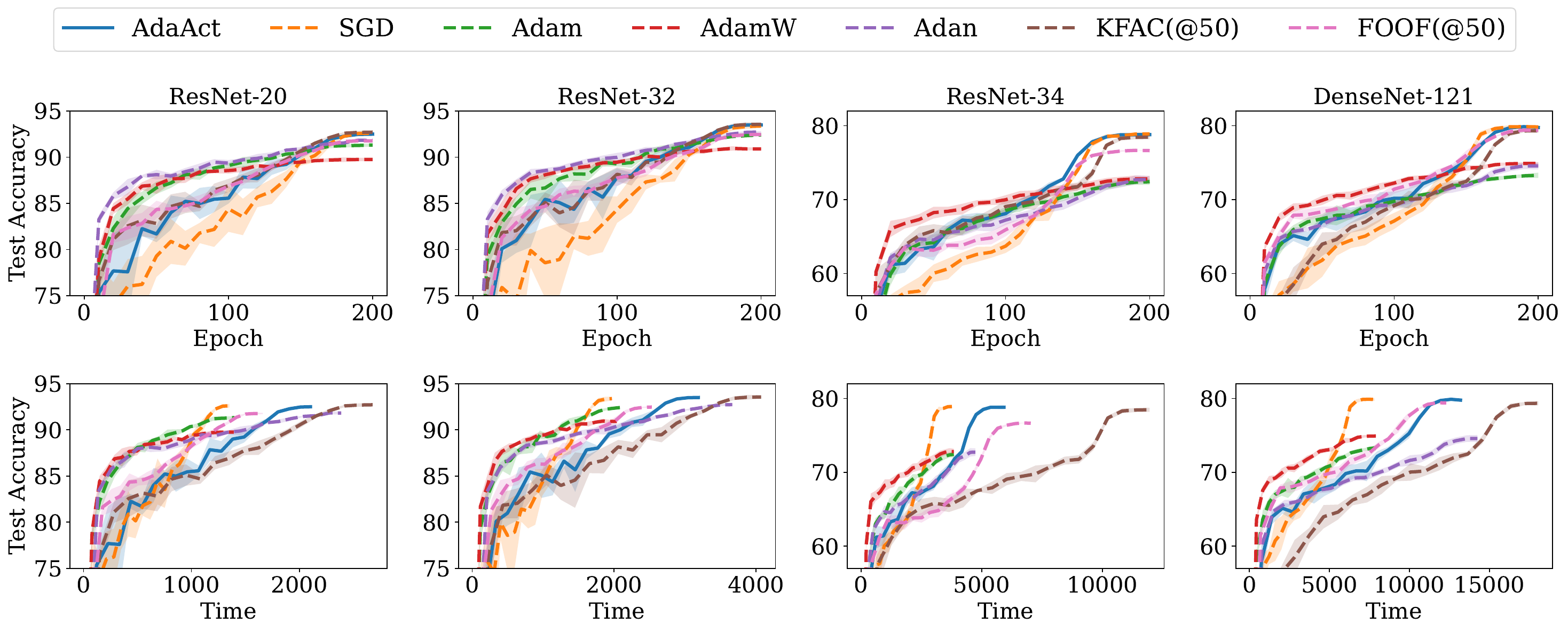}
    \caption{Comparison of test accuracy and training time among methods on CIFAR datasets}
    \label{fig:cifar}
\end{figure}

\begin{table}[tb]
\centering
\caption{Test accuracy (\%) of ResNet and DenseNet on CIFAR datasets} \label{tab:cifar}
\begin{tabular}{c|c|c|c|c}
\toprule
Dataset & \multicolumn{2}{c|}{CIFAR10} & \multicolumn{2}{c}{CIFAR100} \\
Architecture & ResNet-20 & ResNet-32 & ResNet-34 & DenseNet-121\\
\midrule
\adaact & 92.49$\pm$\scriptsize{0.18} & 93.58$\pm$\scriptsize{0.12} & 78.89$\pm$\scriptsize{0.21} & 79.91$\pm$\scriptsize{0.11}\\
SGD & 92.63$\pm$\scriptsize{0.21} & 93.43$\pm$\scriptsize{0.23} & 78.94$\pm$\scriptsize{0.21} & 79.93$\pm$\scriptsize{0.20} \\
Adam & 91.37$\pm$\scriptsize{0.17} & 92.43$\pm$\scriptsize{0.14} & 72.49$\pm$\scriptsize{0.37} & 73.38$\pm$\scriptsize{0.30} \\
AdamW & 89.85$\pm$\scriptsize{0.21} & 90.99$\pm$\scriptsize{0.21} & 72.87$\pm$\scriptsize{0.46} & 74.99$\pm$\scriptsize{0.20} \\
Adan & 91.87$\pm$\scriptsize{0.16} & 92.77$\pm$\scriptsize{0.15} & 72.83$\pm$\scriptsize{0.54} & 74.65$\pm$\scriptsize{0.43} \\
KFAC & 92.74$\pm$\scriptsize{0.24} & 93.64$\pm$\scriptsize{0.20} & 78.51$\pm$\scriptsize{0.32} & 79.45$\pm$\scriptsize{0.27}\\
FOOF & 91.79$\pm$\scriptsize{0.09} & 92.50$\pm$\scriptsize{0.16} & 76.79$\pm$\scriptsize{0.09} & 79.44$\pm$\scriptsize{0.18}\\
\bottomrule
\end{tabular}
\end{table}

\paragraph{Result.}
To demonstrate the effect of activation variance adaption in \adaact,
we trained ResNet-20 
models on CIFAR10 dataset with and without using batch normalization (BN)
and presented the result in Figure~\ref{fig:bn_removed}.
As shown, the removal of BN causes performance degradation for both
\adaact and SGD. However, we see that \adaact is less affected by the
removal. This is because that stabilized activations in \adaact can
create with BN to a certain extent.

Figure~\ref{fig:cifar} shows the 
test accuracy of 
methods against iterations and wall-clock time. As shown in the graphs on the
top row, the 
methods belonging to the adaptive gradient family
(i.e., Adan, Adam, and AdamW) achieve higher test accuracy at early
epochs and quickly reach the plateau around epoch 100.  
However, at the end of training, they end up achieving lower test
accuracy than the other three algorithms: \adaact, SGD, and KFAC
(see Figure \ref{fig:cifar} and Table \ref{tab:cifar}).
This coincides with the observation made in prior work that 
adaptive gradient methods
are faster in terms of convergence but
suffer from poor generalization. 
While \adaact is an adaptive  
methods, it achieves similar 
test accuracy with SGD and KFAC.
This demonstrates the effectiveness of \adaact's
activation variance-based adaptation in improving generalization
performance.  
The graphs on the bottom row of Figure~\ref{fig:cifar} shows that KFAC
achieves the same accuracy with SGD and \adaact, but it's the slowest
in terms of wall-clock time.
FOOF and KFAC  
require computing the inverse of preconditioning matrix periodically
and its frequency is controlled by the hyperparameter
$T_{\text{inv}}$. We set $T_{\text{inv}}=50$ for both FOOF and KFAC.
For CIFAR100, we observe that 
\adaact converges as fast as Adam with training time similar to that
of Adan --- it is still significantly faster than FOOF and
KFAC. 
This shows that \adaact has an ability to match the
generalization performance of KFAC while its speed is comparable to
that of adaptive first-order algorithms.

\subsection{ImageNet Training Results}
\paragraph{Training Settings.} For ImageNet,
we train ResNet-50, 101, and ViT-S networks, adopting the ``A2''
settings 
described in~\cite{wightman2021resnet}. It utilizes random crop,
horizontal flip, Mixup (0.1) \cite{zhang2018mixup}/CutMix (1.0)
\cite{Yun2019CutMixRS} with probability 0.5, and RandAugment
\cite{Cubuk2019RandaugmentPA} with $M=7$, $N=2$ and $MSTD=0.5$. It
employs stochastic depth \cite{Huang2016DeepNW} set at 0.05 and
utilizes a cosine learning rate decay, in conjunction with a binary
cross-entropy loss. 
For \adaact, we used a template code from \cite{rw2019timm}, setting a
mini-batch size of 2,048 and cross-entropy loss is used for all architectures.
We compare \adaact with the baselines as previously reported by
\cite{xie2023adan}, but we omit the training results from Adan as they
rely on micro fine-tuning. For both 
architectures, we used a
large learning rate of 4.0, following \emph{the linear scaling rule}
as suggested in~\cite{Goyal2017AccurateLM}. We opted for a smaller
weight decay value in ViT compared to ResNets to facilitate faster
convergence, as its gradient per iteration significantly differs from
that of CNNs due to a much sharper loss landscape
\cite{Chen2021WhenVT}. 

\begin{table}[tb]
\centering
\caption{Top-1 accuracy (\%) of ResNet-50 and 101 on ImageNet for 100
epochs. $^\dagger$ are reported in {\protect\cite{xie2023adan}}}
\label{tab:imgnet_resnet}
\begin{tabular}{c|c|c}
    \toprule
    Architecture & ResNet-50 & ResNet-101 \\
    \midrule
    \adaact & 77.6 & 79.4  \\
    SGD & 77.0$^\dagger$  & 79.3$^\dagger$ \\
    Adam & 76.9$^\dagger$ & 78.4$^\dagger$ \\
    AdamW & 77.0$^\dagger$  & 78.9$^\dagger$ \\
    LAMB & 77.0$^\dagger$  & 79.4$^\dagger$ \\
    SAM & 77.3$^\dagger$  & 79.5$^\dagger$ \\
    \bottomrule
\end{tabular}
\end{table}

\begin{table}[tb]
\caption{Top-1 accuracy (\%) of ViT-S on ImageNet for 150
epochs. $^\dagger$ are reported in {\protect\cite{xie2023adan}}} \label{tab:imgnet_vit}
\begin{center}
\begin{tabular}{c|cccc}
    \toprule
     \adaact & SGD & Adam & AdamW & LAMB \\
    \midrule
    73.8 & 68.7$^\dagger$ & 64.0$^\dagger$ & 78.9$^\dagger$ & 73.8$^\dagger$\\
    \bottomrule
\end{tabular}
\end{center}
\end{table}

\paragraph{Result.}
Table~\ref{tab:imgnet_resnet} demonstrate that \adaact can provide
good performance in large batch training setup (used in large-scale
training).
Specifically, \adaact achieves the
top-1 accuracy of 77.6\% on ResNet-50, 
higher than
other baseline methods, most of them showing the accuracy around 77.0\%.
For ResNet-101, 
\adaact delivers competitive accuracy of 79.4\%, matching the
accuracy of LAMB~\cite{You2020Large}
and is only slightly behind the
performance of SAM~\cite{foret2021sharpnessaware} (79.5\%),
the best performer in this comparison but the slowest at the same time
(due to the use of twice as many backprops as other methods).
The fact that \adaact surpasses Adam and LAMB, the methods-of-choice
in practice for large-batch training, indicates its potential as
an alternative 
in large-scale training. 
Table~\ref{tab:imgnet_vit} presents the top-1 accuracy of ViT-S model
on ImageNet dataset.
In this experiment, \adaact attains the top-1 accuracy of 73.8\%, 
matching LAMB's performance, which is significant given that LAMB is
specifically designed for this setup and for
Transformers~\cite{Vaswani2017AttentionIA}.  
Although \adaact does not achieve the same accuracy with AdamW's leading 78.9\%, it still 
surpasses traditional methods such as SGD and Adam. This
demonstrates
\adaact's suitability 
and ability to handle particular optimization challenges for vision
transformers.
The fact that \adaact
outperforms SGD and Adam underscores its capability in
navigating ViTs' complex optimization landscape, which is notably
different from that of CNNs.
\adaact's comparable performance to LAMB, while still showing some gap
from AdamW's best performance, nevertheless marks it as a versatile
optimization method potentially applicable across various
architectures.


\section{Conclusions}
\label{sec:conclusion}
We presented \adaact, an adaptive method designed to achieve improved
generalization via stabilizing neuron outputs.
Our approach focuses on adaptivity at the neuron
level, promoting stable neuron responses even 
in the presence of varying activation variances.
Beyond enhanced generalization, \adaact introduces a fresh
perspective on adapting learning rates based on activation variance, complementing existing activation regularization methods. In conclusion, \adaact offers an effective solution to the challenges
associated with adaptive optimization methods. Its improvements in
generalization and network stability make it a valuable addition to
the toolkit of deep learning practitioners.


\bibliographystyle{plain}

\newpage
\appendix
\addcontentsline{toc}{section}{Appendix}

\section{Proof of Theorem~\ref{thm:3.1}}

Notice that \adaact falls within the class of general
Adam-type optimizer described in Algorithm~\ref{alg:genadam}. To see
this, we rewrite the $\mat{V}_t$ update in Line~\ref{alg:v_update} of
Algorithm~\ref{alg:adaact} in a vector form.
$\widehat{\vec{v}}_{t}
= \beta_{2}\widehat{\vec{v}}_{t-1} +
(1-\beta_{2})(\tilde{\vec{a}}_{t}^{2}\otimes\vec{1}_{m_{\ell}})$, where
$\tilde{\vec{a}}_{t}^{2}\otimes\vec{1}_{m_{\ell}} =
\vec{g}_{t}^{2}/(\vec{1}_{(m_{\ell-1}+1)}\otimes \vec{p}_{t}^{2})$.

\label{proof:thm3.1}
\begin{algorithm}[ht]
\caption{Generalized Adam}
\label{alg:genadam}
\textbf{Initialize} $\vec{m}_{0} = 0$ and $\vec{\theta}_{1}$ 
\begin{algorithmic}[1]
\FOR{$t = 1$ to $T$}
    \STATE $\vec{m}_{t} = \beta_{1,t} \vec{m}_{t-1} + (1 - \beta_{1,t}) \vec{g}_{t}$ 
    \STATE $\widehat{\vec{v}}_{t} = h_k(\vec{g}_{1}, \vec{g}_{2}, \dots, \vec{g}_{t})$ 
    \STATE $\vec{\theta}_{t+1} = \vec{\theta}_{t} - \eta_{t} \frac{\vec{m}_{t}}{\sqrt{\widehat{\vec{v}}_{t}}}$ 
\ENDFOR
\end{algorithmic}
\end{algorithm}

\begin{lemma} \label{lem:zseq_diff}
  Let $\vec{\theta}_0  := \vec{\theta}_1$ in
  Algorithm~\ref{alg:genadam}. Consider the sequence
  \[
    \vec{z}_t = \vec{\theta}_t + \frac{\beta_1}{1 - \beta_1}(\vec{\theta} -
    \vec{\theta}_{t-1})\,, \quad \forall t \geq 1\,.
  \]
  Then we have
  \begin{align*}
    \vec{z}_{t+1} -\vec{z}_t
    &= -\frac{\beta_1}{1-\beta_1}
      \left(\frac{\eta_t}{\sqrt{\hvec{v}_t}} - \frac{\eta_{t-1}}{\sqrt{\hvec{v}_{t-1}}}\right) \odot
      \vec{m}_{t-1}-\eta_t\vec{g}_t/\sqrt{\hvec{v}_t}\,, \qquad
      \forall t>1\,,\\
    \intertext{and}
    \vec{z}_2 - \vec{z}_1
    &= -\frac{\eta_1\vec{m}_1}{(1-\beta_1)\sqrt{\hvec{v}_1}}
      = -\frac{\eta_1\vec{g}_1}{\sqrt{\hvec{v}_1}}\,.    
  \end{align*}
\end{lemma}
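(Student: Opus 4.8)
The plan is to verify the two displayed identities by direct substitution, unwinding the definition of $\vec{z}_t$ and the update rule from Algorithm~\ref{alg:genadam}. First I would write out $\vec{z}_{t+1} - \vec{z}_t$ using the definition
\[
  \vec{z}_{t+1} - \vec{z}_t = (\vec{\theta}_{t+1} - \vec{\theta}_t) + \frac{\beta_1}{1-\beta_1}\big((\vec{\theta}_{t+1}-\vec{\theta}_t) - (\vec{\theta}_t - \vec{\theta}_{t-1})\big)\,,
\]
so the whole thing is expressed in terms of the two consecutive parameter increments $\vec{\theta}_{t+1}-\vec{\theta}_t$ and $\vec{\theta}_t - \vec{\theta}_{t-1}$. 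From the update line of Algorithm~\ref{alg:genadam} these increments equal $-\eta_t \vec{m}_t/\sqrt{\hvec{v}_t}$ and $-\eta_{t-1}\vec{m}_{t-1}/\sqrt{\hvec{v}_{t-1}}$ respectively.

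Next I would substitute the momentum recursion $\vec{m}_t = \beta_1 \vec{m}_{t-1} + (1-\beta_1)\vec{g}_t$ (using $\beta_{1,t}=\beta_1$) into the increment $-\eta_t\vec{m}_t/\sqrt{\hvec{v}_t}$, splitting it as $-\frac{\beta_1 \eta_t}{\sqrt{\hvec{v}_t}}\odot\vec{m}_{t-1} - \frac{(1-\beta_1)\eta_t}{\sqrt{\hvec{v}_t}}\odot\vec{g}_t$. Then I collect all terms proportional to $\vec{m}_{t-1}$: the $\frac{\beta_1}{1-\beta_1}$ factor multiplying $-\eta_t\vec{m}_t/\sqrt{\hvec{v}_t} + \eta_{t-1}\vec{m}_{t-1}/\sqrt{\hvec{v}_{t-1}}$ combines with the leading $-\eta_t\vec{m}_t/\sqrt{\hvec{v}_t}$ term, and the algebra should telescope so that the coefficient of $\vec{m}_{t-1}$ becomes exactly $-\frac{\beta_1}{1-\beta_1}\big(\frac{\eta_t}{\sqrt{\hvec{v}_t}} - \frac{\eta_{t-1}}{\sqrt{\hvec{v}_{t-1}}}\big)$ while all the $\vec{g}_t$ contributions collapse to $-\eta_t\vec{g}_t/\sqrt{\hvec{v}_t}$. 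This is just careful bookkeeping: group the $\frac{\beta_1}{1-\beta_1}\cdot\frac{\beta_1}{1}$ and $\frac{\beta_1}{1-\beta_1}\cdot\frac{1}{1}$ coefficient pieces and check the $(1-\beta_1)$ cancellations.

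For the base case $\vec{z}_2 - \vec{z}_1$, I would use $\vec{\theta}_0 := \vec{\theta}_1$ so the term $\vec{\theta}_1 - \vec{\theta}_0$ vanishes, leaving $\vec{z}_2 - \vec{z}_1 = \frac{1}{1-\beta_1}(\vec{\theta}_2 - \vec{\theta}_1) = -\frac{\eta_1 \vec{m}_1}{(1-\beta_1)\sqrt{\hvec{v}_1}}$; then since $\vec{m}_0 = 0$ the momentum recursion gives $\vec{m}_1 = (1-\beta_1)\vec{g}_1$, which yields the stated $-\eta_1\vec{g}_1/\sqrt{\hvec{v}_1}$. There is no real obstacle here — the only thing to be careful about is keeping the Hadamard/elementwise structure straight (the $\eta_t/\sqrt{\hvec{v}_t}$ are vectors acting coordinatewise on $\vec{m}_{t-1}$ and $\vec{g}_t$) and not dropping a $\beta_1$ or a $(1-\beta_1)$ during the regrouping. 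The "hard" part, if any, is purely the coefficient-matching in the second paragraph, which I would carry out by collecting the $\vec{m}_{t-1}$ terms and the $\vec{g}_t$ terms separately and simplifying each.
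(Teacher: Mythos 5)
Your proposal is correct and follows essentially the same route as the paper's proof: a direct algebraic verification using the update rule $\vec{\theta}_{t+1}-\vec{\theta}_t = -\eta_t\vec{m}_t/\sqrt{\hvec{v}_t}$, the momentum recursion $\vec{m}_t = \beta_1\vec{m}_{t-1}+(1-\beta_1)\vec{g}_t$, and $\vec{m}_0=\vec{0}$ with $\vec{\theta}_0:=\vec{\theta}_1$ for the base case. The only difference is organizational --- you expand $\vec{z}_{t+1}-\vec{z}_t$ directly as $\frac{1}{1-\beta_1}(\vec{\theta}_{t+1}-\vec{\theta}_t)-\frac{\beta_1}{1-\beta_1}(\vec{\theta}_t-\vec{\theta}_{t-1})$ and collect the $\vec{m}_{t-1}$ and $\vec{g}_t$ terms, whereas the paper manipulates the update equation first and then reassembles the $\vec{z}$ sequence; both yield the same identity.
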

\begin{proof}
  \begin{align*}
    \vec{\theta}_{t+1} - \vec{\theta}_t
    &= -\eta_t\vec{m}_t / \sqrt{\hvec{v}_t}\\
    &= -\eta_t(\beta_1\vec{m}_{t-1} + (1-\beta_1)\vec{g}_t) \sqrt{\hvec{v}_t}\\
    &= \beta_1\frac{\eta_t}{\eta_{t-1}}\frac{\sqrt{\hvec{v}_{t-1}}}{\sqrt{\hvec{v}_t}}\odot
      (\vec{\theta}_{t}-\vec{\theta}_{t-1}) -\eta_t(1-\beta_1)\vec{g}_t/\sqrt{\hvec{v}_t}\\
    &= \beta_1(\vec{\theta}_{t}-\vec{\theta}_{t-1}) + \beta_1\left(
      \frac{\eta_t}{\eta_{t-1}}\frac{\sqrt{\hvec{v}_{t-1}}}{\sqrt{\hvec{v}_t}}
      - 1 \right) \odot (\vec{\theta}_{t}-\vec{\theta}_{t-1})
      -\eta_t(1-\beta_1)\vec{g}_t/\sqrt{\hvec{v}_t}\\
    &= \beta_1(\vec{\theta}_{t}-\vec{\theta}_{t-1}) + \beta_1\left(
      \frac{\eta_t}{\eta_{t-1}}\frac{\sqrt{\hvec{v}_{t-1}}}{\sqrt{\hvec{v}_t}}
      - 1 \right) \odot (-\eta_{t-1}\frac{\vec{m}_{t-1}}{\sqrt{\hvec{v}_{t-1}}})
      -\eta_t(1-\beta_1)\vec{g}_t/\sqrt{\hvec{v}_t}\\
    &=\beta_1(\vec{\theta}_{t}-\vec{\theta}_{t-1}) - \beta_1\left(
      \frac{\eta_t}{\sqrt{\hvec{v}_t}} - \frac{\eta_{t-1}}{\sqrt{\hvec{v}_{t-1}}}
      \right) \odot
      \vec{m}_{t-1}-\eta_t(1-\beta_1)\vec{g}_t/\sqrt{\hvec{v}_t}
  \end{align*}
  Since $\vec{\theta}_{t+1} - \vec{\theta}_t =
  (1-\beta_1)(\vec{\theta}_{t+1}-\vec{\theta}_t) +
  \beta_1(\vec{\theta}_{t+1}-\vec{\theta}_t) = (1-\beta_1)\vec{\theta}_{t+1} +
  \beta_1(\vec{\theta}_{t+1}-\vec{\theta}_t) - (1-\beta_1)\vec{\theta}_t$, we have
  \begin{align*}
    & (1-\beta_1)\vec{\theta}_{t+1} + \beta_1(\vec{\theta}_{t+1}-\vec{\theta}_t)\\
    &= (1-\beta_1)\vec{\theta}_t + 
      \beta_1(\vec{\theta}_{t}-\vec{\theta}_{t-1}) - \beta_1\left(
      \frac{\eta_t}{\sqrt{\hvec{v}_t}} - \frac{\eta_{t-1}}{\sqrt{\hvec{v}_{t-1}}}
      \right) \odot
      \vec{m}_{t-1}-\eta_t(1-\beta_1)\vec{g}_t/\sqrt{\hvec{v}_t}\\
    \intertext{Dividing both sides by $1-\beta_1$ yields }
    &\vec{\theta}_{t+1} + \frac{\beta_1}{1-\beta_1}(\vec{\theta}_{t+1}-\vec{\theta}_t) \\
    &=\vec{\theta}_t + \frac{\beta_1}{1-\beta_1}(\vec{\theta}_{t}-\vec{\theta}_{t-1})      
      - \frac{\beta_1}{1-\beta_1}
      \left(\frac{\eta_t}{\sqrt{\hvec{v}_t}} - \frac{\eta_{t-1}}{\sqrt{\hvec{v}_{t-1}}}\right) \odot
      \vec{m}_{t-1}-\eta_t\vec{g}_t/\sqrt{\hvec{v}_t}\,.
  \end{align*}
  Define the sequence
  \[
    \vec{z}_t = \vec{\theta}_t + \frac{\beta_1}{1-\beta_1}(\vec{\theta}_{t}-\vec{\theta}_{t-1})\,.
  \]
  Then we have
  \begin{align*}
    \vec{z}_{t+1}
    &= \vec{z}_t - \frac{\beta_1}{1-\beta_1}
      \left(\frac{\eta_t}{\sqrt{\hvec{v}_t}} - \frac{\eta_{t-1}}{\sqrt{\hvec{v}_{t-1}}}\right) \odot
      \vec{m}_{t-1}-\eta_t\vec{g}_t/\sqrt{\hvec{v}_t}\,, \forall t>1\,.
  \end{align*}
  For $t=1$, we have $\vec{z}_1 = \vec{\theta}_1$, and
  \begin{align*}
    \vec{z}_2 - \vec{z}_1
    &= \vec{\theta}_2 + \frac{\beta_1}{1-\beta_1}(\vec{\theta}_2-\vec{\theta}_1) - \vec{\theta}_1\\
    &= \frac{1}{1-\beta_1}(\vec{\theta}_2-\vec{\theta}_1)\\
    &= -\frac{\eta_1\vec{m}_1}{(1-\beta_1)\sqrt{\hvec{v}_1}}
      = -\frac{\eta_1\vec{g}_1}{\sqrt{\hvec{v}_1}}\,.
  \end{align*}  
\end{proof}

Without loss of generality, we assume that Algorithm~\ref{alg:genadam}
is initialized such that
\begin{equation} \label{eq:algo_init}
  \left(\frac{\eta_1}{\sqrt{\hvec{v}_1}} -
    \frac{\eta_0}{\hvec{v}_0}\right) \odot \vec{m}_0 = \vec{0}\,.
\end{equation}

\begin{lemma} \label{lem:opt_gap}
  Suppose that assumptions~\ref{as:A1}-~\ref{as:A3} hold
  true. Then we have
  \[
    \E{f(\vec{z}_{t+1}) - f(\vec{z}_1)}\leq \sum_{i=1}^4 T_i\,,
  \]
  where
  \begin{align}
    T_1
    &= -\E{\sum_{i=1}^t
    \ip*{\grad{f}(\vec{z}_i),\, \frac{\beta_1}{1-\beta_1}
    \left(\frac{\eta_i}{\sqrt{\hvec{v}_i}} - \frac{\eta_{i-1}}{\sqrt{\hvec{v}_{i-1}}}\right) \odot
      \vec{m}_{i-1}}}\,, \\
    T_2
    &= -\E{\sum_{i=1}^t \ip*{\grad{f}(\vec{z}_i),\, \eta_t\vec{g}_i/\sqrt{\hvec{v}_i}}}\,,\\
    T_3
    &= \E{\sum_{i=1}^t L\norm*{\frac{\beta_1}{1-\beta_1}
    \left(\frac{\eta_i}{\sqrt{\hvec{v}_i}} - \frac{\eta_{i-1}}{\sqrt{\hvec{v}_{i-1}}}\right) \odot
      \vec{m}_{i-1}}^2}\,,\\
    T_4
    &= \E{\sum_{i=1}^tL\norm*{\frac{\eta_i\vec{g}_i}{\sqrt{\hvec{v}_i}}}^2}\,.
  \end{align}
\end{lemma}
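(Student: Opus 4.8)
The statement is the standard ``descent lemma $+$ telescoping'' argument applied to the auxiliary iterates $\vec{z}_t$, with the increments replaced by the closed forms from Lemma~\ref{lem:zseq_diff}. First I would invoke Assumption~\ref{as:A1}: since $f$ has $L$-Lipschitz gradient, for every $i\geq 1$
\[
  f(\vec{z}_{i+1}) \;\leq\; f(\vec{z}_i) + \ip*{\grad{f}(\vec{z}_i),\; \vec{z}_{i+1}-\vec{z}_i} + \frac{L}{2}\norm*{\vec{z}_{i+1}-\vec{z}_i}^2\,.
\]
By Lemma~\ref{lem:zseq_diff}, for $i>1$ the increment decomposes as $\vec{z}_{i+1}-\vec{z}_i=\vec{u}_i+\vec{w}_i$ with
\[
  \vec{u}_i:=-\frac{\beta_1}{1-\beta_1}\left(\frac{\eta_i}{\sqrt{\hvec{v}_i}}-\frac{\eta_{i-1}}{\sqrt{\hvec{v}_{i-1}}}\right)\odot\vec{m}_{i-1}\,,\qquad \vec{w}_i:=-\frac{\eta_i\vec{g}_i}{\sqrt{\hvec{v}_i}}\,,
\]
and the same decomposition holds at $i=1$: there $\vec{z}_1=\vec{\theta}_1$ and $\vec{z}_2-\vec{z}_1=-\eta_1\vec{g}_1/\sqrt{\hvec{v}_1}=\vec{w}_1$, while $\vec{u}_1=\vec{0}$ since $\vec{m}_0=\vec{0}$ (equivalently, by the initialization convention~\eqref{eq:algo_init}). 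Hence the decomposition is uniform over $i\in[t]$.

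Next I would split the two terms on the right of the descent inequality. By bilinearity, $\ip*{\grad{f}(\vec{z}_i),\,\vec{z}_{i+1}-\vec{z}_i}=\ip*{\grad{f}(\vec{z}_i),\,\vec{u}_i}+\ip*{\grad{f}(\vec{z}_i),\,\vec{w}_i}$; summing over $i=1,\dots,t$ and taking expectation over all the randomness in $\{\vec{g}_i\}$, the first family of summands is exactly $T_1$ (the factor $-\beta_1/(1-\beta_1)$ is already carried inside $\vec{u}_i$) and the second is exactly $T_2$. For the quadratic term I would use $\norm*{\vec{u}_i+\vec{w}_i}^2\leq 2\norm*{\vec{u}_i}^2+2\norm*{\vec{w}_i}^2$, so that $\tfrac{L}{2}\norm*{\vec{z}_{i+1}-\vec{z}_i}^2\leq L\norm*{\vec{u}_i}^2+L\norm*{\vec{w}_i}^2$; summing and taking expectation produces $T_3$ and $T_4$. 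The left-hand side telescopes, $\sum_{i=1}^t\bigl(f(\vec{z}_{i+1})-f(\vec{z}_i)\bigr)=f(\vec{z}_{t+1})-f(\vec{z}_1)$, and assembling the pieces gives $\E\left[f(\vec{z}_{t+1})-f(\vec{z}_1)\right]\leq T_1+T_2+T_3+T_4$, as claimed.

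There is no genuine obstacle in this lemma: it is smoothness plus bookkeeping, and it deliberately leaves $T_1,\dots,T_4$ unbounded---the real analytic difficulty (in particular the cross term $T_2$, where one cannot naively condition on $\vec{g}_i$ since $\hvec{v}_i$ itself depends on $\vec{g}_i$) is deferred to the subsequent steps. The only points that need a little care are the $i=1$ boundary term, handled above via $\vec{z}_1=\vec{\theta}_1$ and~\eqref{eq:algo_init}, and the legitimacy of moving $\E$ inside the finite sums together with finiteness of every expectation, which follows from the bounded-gradient Assumption~\ref{as:A2} and from the entries of $\sqrt{\hvec{v}_t}$ being bounded away from zero (guaranteed by the stability constant $\epsilon$ in Algorithm~\ref{alg:adaact}).
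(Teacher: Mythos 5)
Your proposal is correct and follows essentially the same route as the paper: the descent inequality from the $L$-Lipschitz gradient applied to the $\vec{z}_i$ sequence, the increment decomposition from Lemma~\ref{lem:zseq_diff}, bilinearity for the inner-product terms, the bound $\norm{\vec{u}+\vec{w}}^2\leq 2\norm{\vec{u}}^2+2\norm{\vec{w}}^2$ for the quadratic term, and telescoping. Your explicit handling of the $i=1$ boundary case via $\vec{m}_0=\vec{0}$ and the initialization convention is a small point of added care that the paper leaves implicit.
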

\begin{proof}
By the smoothness of $\grad{f}$, we have
\[
  f(\vec{z}_{t+1}) \leq f(\vec{z}_t) + \ip{\grad{f}(\vec{z}_t),\,
    \vec{d}_t} + \frac{L}{2}\norm{\vec{d}_t}^2\,,
\]
where $\vec{d}_t = \vec{z}_{t+1} - \vec{z}_t$.
\begin{align*}
  f(\vec{z}_{i+1}) - f(\vec{z}_i)
  &\leq \ip{\grad{f}(\vec{z}_i),\,\vec{d}_i} +
    \frac{L}{2}\norm{\vec{d}_i}^2 \\
  \intertext{From Lemma~\ref{lem:zseq_diff}, we have }
  &=-\ip*{\grad{f}(\vec{z}_i),\, \frac{\beta_1}{1-\beta_1}
    \left(\frac{\eta_t}{\sqrt{\hvec{v}_t}} - \frac{\eta_{t-1}}{\sqrt{\hvec{v}_{t-1}}}\right) \odot
    \vec{m}_{t-1}}
    -\ip*{\grad{f}(\vec{z}_i),\, \eta_t\vec{g}_t/\sqrt{\hvec{v}_t}} +
    \frac{L}{2}\norm{\vec{d}_i}^2\\
  \intertext{From the above, we get}
  \E{f(\vec{z}_{t+1}) - f(\vec{z}_1)}
  &=\E{\sum_{i=1}^t f(\vec{z}_{i+1}) - f(\vec{z}_i)}\\
  &\leq -\E{\sum_{i=1}^t
    \ip*{\grad{f}(\vec{z}_i),\, \frac{\beta_1}{1-\beta_1}
    \left(\frac{\eta_i}{\sqrt{\hvec{v}_i}} - \frac{\eta_{i-1}}{\sqrt{\hvec{v}_{i-1}}}\right) \odot
    \vec{m}_{i-1}}}    \\
  &\qquad  -\E{\sum_{i=1}^t \ip*{\grad{f}(\vec{z}_i),\, \eta_t\vec{g}_i/\sqrt{\hvec{v}_i}}}   \\
  & \qquad + \E{\sum_{i=1}^t \frac{L}{2}\norm*{-\frac{\beta_1}{1-\beta_1}
    \left(\frac{\eta_t}{\sqrt{\hvec{v}_i}} - \frac{\eta_{i-1}}{\sqrt{\hvec{v}_{i-1}}}\right) \odot
    \vec{m}_{i-1}- \frac{\eta_t\vec{g}_i}{\sqrt{\hvec{v}_i}}}^2}\\
  &\leq -\E{\sum_{i=1}^t
    \ip*{\grad{f}(\vec{z}_i),\, \frac{\beta_1}{1-\beta_1}
    \left(\frac{\eta_i}{\sqrt{\hvec{v}_i}} - \frac{\eta_{i-1}}{\sqrt{\hvec{v}_{i-1}}}\right) \odot
    \vec{m}_{i-1}}}    \\
  &\qquad  -\E{\sum_{i=1}^t \ip*{\grad{f}(\vec{z}_i),\, \eta_t\vec{g}_i/\sqrt{\hvec{v}_i}}}   \\
  & \qquad + \E{\sum_{i=1}^t L\norm*{\frac{\beta_1}{1-\beta_1}
    \left(\frac{\eta_i}{\sqrt{\hvec{v}_i}} - \frac{\eta_{i-1}}{\sqrt{\hvec{v}_{i-1}}}\right) \odot
    \vec{m}_{i-1}}^2} +
    \E{\sum_{i=1}^tL\norm*{\frac{\eta_i\vec{g}_i}{\sqrt{\hvec{v}_i}}}^2} \\
  &=\red{T_1} + \green{T_2} + \yellow{T_3} + \blue{T_4}\,,
\end{align*}
where the last inequality is due to $\norm{\vec{a} + \vec{b}}^2\leq 2\norm{\vec{a}}^2 + 2\norm{\vec{b}}^2$.
\end{proof}

Now, in the following series of lemmas, we bound each term in the
above separately.

\begin{lemma}
  Under the assumptions of \hyperref[as:A1]{A1}-~\hyperref[as:A3]{A3}, we have
  \begin{align*}
    \red{T_1}
    &= -\E{\sum_{i=1}^t
      \ip*{\grad{f}(\vec{z}_i),\, \frac{\beta_1}{1-\beta_1}
      \left(\frac{\eta_t}{\sqrt{\hvec{v}_t}} - \frac{\eta_{t-1}}{\sqrt{\hvec{v}_{t-1}}}\right) \odot
      \vec{m}_{t-1}}} \\
    &\leq H^2\frac{\beta_1}{1-\beta_1}\E{\sum_{i=2}^t\sum_{j=1}^d
      \abs*{\left(\frac{\eta_i}{\sqrt{\hvec{v}_i}} - \frac{\eta_{i-1}}{\sqrt{\hvec{v}_{i-1}}}\right)_j}}\,.
  \end{align*}
\end{lemma}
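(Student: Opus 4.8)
The plan is to pass to absolute values and estimate the inner product inside $T_1$ coordinatewise. Writing $\vec{u}_i := \eta_i/\sqrt{\hvec{v}_i} - \eta_{i-1}/\sqrt{\hvec{v}_{i-1}}$ for the effective step-size difference, the $i$-th summand is $\ip{\grad{f}(\vec{z}_i),\, \vec{u}_i\odot\vec{m}_{i-1}} = \sum_{j=1}^d (\grad{f}(\vec{z}_i))_j (\vec{u}_i)_j (\vec{m}_{i-1})_j$, so by the triangle inequality
\[
  |T_1| \le \frac{\beta_1}{1-\beta_1}\,\E\left[\sum_{i=1}^t \norm{\grad{f}(\vec{z}_i)}_\infty\,\norm{\vec{m}_{i-1}}_\infty\,\norm{\vec{u}_i}_1\right].
\]
It then remains to produce uniform bounds on $\norm{\grad{f}(\vec{z}_i)}_\infty$ and $\norm{\vec{m}_{i-1}}_\infty$, both of which should be $H$.

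For the gradient factor, Assumption~\ref{as:A2} bounds $\norm{\grad{f}}_2 \le H$ (this is the standing boundedness convention in the framework of~\cite{chen2018on}, applied here at the auxiliary point $\vec{z}_i$), hence $\norm{\grad{f}(\vec{z}_i)}_\infty \le H$. For the momentum factor, I would unroll the recursion $\vec{m}_i = \beta_1\vec{m}_{i-1} + (1-\beta_1)\vec{g}_i$ with $\vec{m}_0 = \vec{0}$ to obtain $\vec{m}_{i-1} = (1-\beta_1)\sum_{k=1}^{i-1}\beta_1^{\,i-1-k}\vec{g}_k$, i.e.\ a weighted average of past stochastic gradients with nonnegative weights summing to $1-\beta_1^{\,i-1}\le 1$; since $\norm{\vec{g}_k}_2 \le H$ by Assumption~\ref{as:A2}, convexity (or just the triangle inequality) gives $\norm{\vec{m}_{i-1}}_2 \le H$, hence $\norm{\vec{m}_{i-1}}_\infty \le H$. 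Substituting both bounds yields $|T_1| \le H^2\frac{\beta_1}{1-\beta_1}\E[\sum_{i=1}^t \norm{\vec{u}_i}_1] = H^2\frac{\beta_1}{1-\beta_1}\E[\sum_{i=1}^t\sum_{j=1}^d |(\vec{u}_i)_j|]$.

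Finally I would drop the $i=1$ term: because $\vec{m}_0 = \vec{0}$ (equivalently, by the initialization convention~\eqref{eq:algo_init}), its contribution $\ip{\grad{f}(\vec{z}_1),\,\vec{u}_1\odot\vec{m}_0}$ is zero, so the outer sum may be started at $i=2$. Combining with $T_1 \le |T_1|$ gives exactly the claimed inequality. I do not anticipate a real obstacle here: the only points that need care are the sub-convex-combination argument that keeps $\norm{\vec{m}_{i-1}}$ at the single scale $H$ (rather than letting it grow with $i$), and the routine, framework-level observation that the $i=1$ term vanishes; everything else is H\"older/triangle-inequality bookkeeping.
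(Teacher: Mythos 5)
Your proposal is correct and follows essentially the same route as the paper: bound the inner product coordinatewise to pull out $\norm{\vec{u}_i}_1$, bound $\norm{\grad f(\vec{z}_i)}$ and $\norm{\vec{m}_{i-1}}$ by $H$ via Assumption A2 (the paper proves the momentum bound by induction rather than unrolling, which is equivalent), and drop the $i=1$ term via the initialization convention. The only cosmetic difference is your use of $\ell_\infty$ norms in the H\"older step where the paper invokes (a slightly loose) Cauchy--Schwarz; both yield the identical final bound.
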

\begin{proof}
  By the assumption~\hyperref[as:A2]{A2}, we have
  $\norm{\vec{g}_t}\leq H$. Since $\vec{m}_t = \beta_1\vec{m}_{t-1} +
  (1-\beta_1)\vec{g}_t$, we have $\norm{\vec{m}_t}\leq H$ (this can
  proved using a simple induction).
  \begin{align*}
    T_1
    &= -\E{\sum_{i=2}^t
    \ip*{\grad{f}(\vec{z}_i),\, \frac{\beta_1}{1-\beta_1}
      \left(\frac{\eta_t}{\sqrt{\hvec{v}_t}} - \frac{\eta_{t-1}}{\sqrt{\hvec{v}_{t-1}}}\right) \odot
      \vec{m}_{t-1}}} \\
    &\leq \E{\sum_{i=2}^t \norm{\grad{f}(\vec{z}_i)}
      \norm{\vec{m}_{i-1}} \frac{\beta_1}{1-\beta_1}
      \sum_{j=1}^d\abs*{\left(\frac{\eta_i}{\sqrt{\hvec{v}_i}} -
      \frac{\eta_{i-1}}{\sqrt{\hvec{v}_{i-1}}}\right)_j}} \\
    &\leq H^2\frac{\beta_1}{1-\beta_1}
      \E{\sum_{i=2}^t \sum_{j=1}^d\abs*{\left(\frac{\eta_i}{\sqrt{\hvec{v}_i}} -
      \frac{\eta_{i-1}}{\sqrt{\hvec{v}_{i-1}}}\right)_j}}\,.
  \end{align*}
  In this above, we applied the Cauchy-Schwarz inequality to the first inequality, and the
  second inequality is due to the assumption of bounded gradient.
\end{proof}

\begin{lemma}
  Suppose the assumptions~\hyperref[as:A1]{A1}-~\hyperref[as:A3]{A3} hold. Then
  we have
  \begin{align*}
    \green{T_2}
    &=-\E{\sum_{i=1}^t \ip*{\grad{f}(\vec{z}_i),\, \eta_i\vec{g}_i/\sqrt{\hvec{v}_i}}}\\
    &\leq L^2\left(\frac{\beta_1}{1-\beta_1}\right)^2
    \E{\sum_{i=1}^{t-1}\norm*{\frac{\eta_i\vec{g}_i}{\sqrt{\hvec{v}_i}}}^2}
    + L^2H^2\left(\frac{\beta_1}{1-\beta_1}\right)^4
    \E{\sum_{j=1}^d\sum_{l=2}^{t-1}\abs*{\frac{\eta_{l}}{\sqrt{\hvec{v}_{l}}} -
      \frac{\eta_{l-1}}{\sqrt{\hvec{v}_{l-1}}} }_j^2} \\
    &\qquad +2H^2\E{\sum_{i=2}^t\sum_{j=1}^d
      \abs*{\left( \frac{\eta_i}{\sqrt{\hvec{v}_i}}\right)_j -
      \left(\frac{\eta_{i-1}}{\sqrt{\hvec{v}_{i-1}}}\right)_j}}
      +
      2H^2\E{\sum_{j=1}^d\left(\frac{\eta_1}{\hvec{v}_1}\right)_j}
      \nonumber \\
    &\qquad - \E{\sum_{i=1}^t \eta_i\ip{\grad{f}(\vec{\theta}_i),\,
      \grad{f}(\vec{\theta}_i)/\sqrt{\hvec{v}_i}}}      \\
    &\qquad +
      \frac{1}{2}\E{\sum_{i=2}^t\norm*{\frac{\eta_i\vec{g}_i}{\sqrt{\hvec{v}_i}}}^2}\,.
  \end{align*}  
\end{lemma}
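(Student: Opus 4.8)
The plan is to follow the route used by~\cite{chen2018on} for Adam-type methods: compare the auxiliary point $\vec{z}_i$ with the iterate $\vec{\theta}_i$, and then expose the unbiasedness of the stochastic gradient through a one-step shift of the effective step size. Throughout I would work with the filtration $\mathcal{F}_{i-1}=\sigma(\vec{g}_1,\dots,\vec{g}_{i-1})$ and use that, in Algorithm~\ref{alg:genadam}, $\vec{\theta}_i$, $\vec{z}_i$ and $\vec{m}_{i-1}$ are $\mathcal{F}_{i-1}$-measurable, the step sizes $\eta_t$ are deterministic, while $\hvec{v}_i$ is only $\mathcal{F}_i$-measurable. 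First I would split $\grad{f}(\vec{z}_i)=\bigl(\grad{f}(\vec{z}_i)-\grad{f}(\vec{\theta}_i)\bigr)+\grad{f}(\vec{\theta}_i)$, writing $\green{T_2}=T_2'+T_2''$ where $T_2''=-\E{\sum_{i=1}^t\eta_i\ip*{\grad{f}(\vec{\theta}_i),\,\vec{g}_i/\sqrt{\hvec{v}_i}}}$ and $T_2'$ collects the perturbation inner products; since $\vec{z}_1=\vec{\theta}_1$ (by the convention $\vec{\theta}_0:=\vec{\theta}_1$), the sum in $T_2'$ effectively starts at $i=2$.

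For $T_2''$, the point is that $\eta_{i-1}/\sqrt{\hvec{v}_{i-1}}$ is $\mathcal{F}_{i-1}$-measurable even though $\eta_i/\sqrt{\hvec{v}_i}$ is not. For $i\ge 2$ I would telescope $\eta_i\vec{g}_i/\sqrt{\hvec{v}_i}=\eta_{i-1}\vec{g}_i/\sqrt{\hvec{v}_{i-1}}+\vec{g}_i\odot\bigl(\eta_i/\sqrt{\hvec{v}_i}-\eta_{i-1}/\sqrt{\hvec{v}_{i-1}}\bigr)$, take the conditional expectation of the first piece (which by Assumption~\ref{as:A3} replaces $\vec{g}_i$ with $\grad{f}(\vec{\theta}_i)$), and telescope once more to restore $\eta_i\grad{f}(\vec{\theta}_i)/\sqrt{\hvec{v}_i}$. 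The two telescoping residuals, together with the $i=1$ boundary term (where $\vec{\theta}_1$ is deterministic and $\vec{g}_1-\grad{f}(\vec{\theta}_1)$ is mean-zero), are each bounded by Cauchy--Schwarz and the coordinatewise bounds $\norm{\grad{f}(\vec{\theta}_i)}\le H$, $\norm{\vec{g}_i}\le H$ from Assumption~\ref{as:A2}. This produces exactly the term $-\E{\sum_{i=1}^t\eta_i\ip{\grad{f}(\vec{\theta}_i),\,\grad{f}(\vec{\theta}_i)/\sqrt{\hvec{v}_i}}}$, the $\ell_1$ term $2H^2\E{\sum_{i=2}^t\sum_{j=1}^d\abs*{\bigl(\eta_i/\sqrt{\hvec{v}_i}-\eta_{i-1}/\sqrt{\hvec{v}_{i-1}}\bigr)_j}}$, and the boundary term $2H^2\E{\sum_{j=1}^d(\eta_1/\sqrt{\hvec{v}_1})_j}$.

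For $T_2'$ I would apply Young's inequality $-\ip*{\vec{a},\vec{b}}\le\tfrac12\norm{\vec{a}}^2+\tfrac12\norm{\vec{b}}^2$ with $\vec{a}=\grad{f}(\vec{z}_i)-\grad{f}(\vec{\theta}_i)$ and $\vec{b}=\eta_i\vec{g}_i/\sqrt{\hvec{v}_i}$; the $\vec{b}$ contribution is immediately $\tfrac12\E{\sum_{i=2}^t\norm*{\eta_i\vec{g}_i/\sqrt{\hvec{v}_i}}^2}$, the last term of the lemma. For the $\vec{a}$ contribution, Assumption~\ref{as:A1} together with $\vec{z}_i-\vec{\theta}_i=\tfrac{\beta_1}{1-\beta_1}(\vec{\theta}_i-\vec{\theta}_{i-1})=-\tfrac{\beta_1}{1-\beta_1}\,\eta_{i-1}\vec{m}_{i-1}/\sqrt{\hvec{v}_{i-1}}$ (from Lemma~\ref{lem:zseq_diff}) gives $\norm{\grad{f}(\vec{z}_i)-\grad{f}(\vec{\theta}_i)}^2\le L^2\bigl(\tfrac{\beta_1}{1-\beta_1}\bigr)^2\norm*{\eta_{i-1}\vec{m}_{i-1}/\sqrt{\hvec{v}_{i-1}}}^2$, so what remains is to bound $\sum_{i=1}^{t-1}\norm*{\eta_i\vec{m}_i/\sqrt{\hvec{v}_i}}^2$ by $\sum_i\norm*{\eta_i\vec{g}_i/\sqrt{\hvec{v}_i}}^2$ plus squared effective-step-size differences.

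This last reduction is the main obstacle and the only genuinely laborious step. I would unroll $\vec{m}_i=\sum_{k=1}^i(1-\beta_1)\beta_1^{i-k}\vec{g}_k$, rewrite $\eta_i\vec{g}_k/\sqrt{\hvec{v}_i}=\eta_k\vec{g}_k/\sqrt{\hvec{v}_k}+\vec{g}_k\odot\sum_{l=k+1}^i\bigl(\eta_l/\sqrt{\hvec{v}_l}-\eta_{l-1}/\sqrt{\hvec{v}_{l-1}}\bigr)$, split via $\norm{\vec{u}+\vec{w}}^2\le 2\norm{\vec{u}}^2+2\norm{\vec{w}}^2$, and apply Jensen's inequality to the convex combination with weights $(1-\beta_1)\beta_1^{i-k}$ (total mass $\le 1$). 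Summing over $i$ and using $\sum_{m\ge0}\beta_1^m=(1-\beta_1)^{-1}$, $\sum_{m\ge0}m\beta_1^m=\beta_1(1-\beta_1)^{-2}$, and $\norm{\vec{g}_k}\le H$, then folding in the $\tfrac12$ from Young's inequality and the factor $2$ from the split, lands exactly the two remaining terms $L^2\bigl(\tfrac{\beta_1}{1-\beta_1}\bigr)^2\E{\sum_{i=1}^{t-1}\norm*{\eta_i\vec{g}_i/\sqrt{\hvec{v}_i}}^2}$ and $L^2H^2\bigl(\tfrac{\beta_1}{1-\beta_1}\bigr)^4\E{\sum_{j=1}^d\sum_{l=2}^{t-1}\abs*{\bigl(\eta_l/\sqrt{\hvec{v}_l}-\eta_{l-1}/\sqrt{\hvec{v}_{l-1}}\bigr)_j}^2}$. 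Collecting the contributions of all steps yields the stated bound; the parts that need care are the measurability bookkeeping in the $T_2''$ step and verifying that the nested weighted sums in the last step produce precisely the claimed powers of $\beta_1/(1-\beta_1)$.
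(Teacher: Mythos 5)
Your proposal follows essentially the same route as the paper's proof: the same split of $\grad{f}(\vec{z}_i)$ into $\grad{f}(\vec{\theta}_i)$ plus a perturbation, the same Young's-inequality-plus-smoothness treatment of the perturbation term, the same measurability/telescoping trick (shifting to the $\mathcal{F}_{i-1}$-measurable step size $\eta_{i-1}/\sqrt{\hvec{v}_{i-1}}$ so the mean-zero noise drops out, at the cost of $2H^2$ times the $\ell_1$ step-size differences and the $i=1$ boundary term), and the same unrolling of $\vec{m}_{i-1}$ with a $\norm{\vec{u}+\vec{w}}^2\le 2\norm{\vec{u}}^2+2\norm{\vec{w}}^2$ split and geometric-series bookkeeping for the final two terms. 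The only cosmetic differences are that you telescope before conditioning (rather than decomposing $\vec{g}_i=\grad{f}(\vec{\theta}_i)+\vec{\delta}_i$ first) and invoke Jensen where the paper uses the symmetrization $ab\le\tfrac12(a^2+b^2)$; both yield identical bounds.
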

\begin{proof}
  From the definition, we have
  \[
    \vec{z}_i - \vec{\theta}_i = \frac{\beta_1}{1-\beta_1}(\vec{\theta}_i -
    \vec{\theta}_{i-1}) =
    \frac{\beta_1}{1-\beta_1}\left(-\frac{\eta_{i-1}\vec{m}_{i-1}}{\sqrt{\hvec{v}_{i-1}}}\right)
    \text{ and } \vec{z}_1 = \vec{\theta}_1\,.
  \]
  Thus, we have
  \begin{align*}
    T_2
    &= -\E{\sum_{i=1}^t \eta_i\ip*{\grad{f}(\vec{z}_i) +
      \grad{f}(\vec{\theta}_i) - \grad{f}(\vec{\theta}_i),\,
      \frac{\vec{g}_i}{\sqrt{\hvec{v}_i}}}}\\
    &=-\E{\sum_{i=1}^t \eta_i\ip*{\grad{f}(\vec{\theta}_i), \, \frac{\vec{g}_i}{\sqrt{\hvec{v}_i}}}}
      -\E{\sum_{i=1}^t
      \eta_i\ip*{\grad{f}(\vec{z}_i)-\grad{f}(\vec{\theta}_i), \,
      \frac{\vec{g}_i}{\sqrt{\hvec{v}_i}}}}\,. \\
    \intertext{Applying $\ip{\vec{a}, \vec{b}}\leq
    \frac{1}{2}(\norm{\vec{a}}^2 +\norm{\vec{b}}^2)$ to the second
    term yields}
    &\leq -\E{\sum_{i=1}^t \eta_i\ip*{\grad{f}(\vec{\theta}_i), \,
      \frac{\vec{g}_i}{\sqrt{\hvec{v}_i}}}}
      + \E{\sum_{i=2}^t
      \frac{1}{2}\norm{\grad{f}(\vec{\theta}_i)-\grad{f}(\vec{z}_i)}^2 +
      \frac{1}{2}\norm*{\frac{\eta_i\vec{g}_i}{\sqrt{\hvec{v}_i}}}^2}\,.\\
    \intertext{From the smoothness of $\grad{f}$, we get}
    &\leq -\E{\sum_{i=1}^t \eta_i\ip*{\grad{f}(\vec{\theta}_i), \,
      \frac{\vec{g}_i}{\sqrt{\hvec{v}_i}}}}
      +
      \frac{L^2}{2}\E{\sum_{i=2}^t\norm*{\frac{\beta_1}{1-\beta_1}\left(\frac{\eta_{i-1}\vec{m}_{i-1}}{\sqrt{\hvec{v}_{i-1}}}\right)}^2}
      +
      \frac{1}{2}\E{\sum_{i=2}^t\norm*{\frac{\eta_i\vec{g}_i}{\sqrt{\hvec{v}_i}}}^2}
    \\
    &= -T_5 + \frac{L^2}{2}T_6 + \frac{1}{2}T_7\,.
  \end{align*}
  \paragraph{Bound on $T_5$.}
  The noisy gradient can be expressed as $\vec{g}_t
  =\grad{f}(\vec{\theta}_t) + \vec{\delta}_t$ with $\E{\vec{\delta}_t} =
  \vec{0}$.
  \begin{align}
    T_5
    &=\E{\sum_{i=1}^t \eta_i\ip{\grad{f}(\vec{\theta}_i),\,
      \vec{g}_i/\sqrt{\hvec{v}_i}}}\\
    &=\E{\sum_{i=1}^t \eta_i\ip{\grad{f}(\vec{\theta}_i),\,
      (\grad{f}(\vec{\theta}_i) + \vec{\delta}_i)/\sqrt{\hvec{v}_i}}}\\
    &=\E{\sum_{i=1}^t \eta_i\ip{\grad{f}(\vec{\theta}_i),\,
      \grad{f}(\vec{\theta}_i)/\sqrt{\hvec{v}_i}}}
      + \E{\sum_{i=1}^t\eta_i\ip{\grad{f}(\vec{\theta}_i),\,
      \vec{\delta}_i/\sqrt{\hvec{v}_i}}} \label{eq:T5_decomp}
  \end{align}
  The second term in~\eqref{eq:T5_decomp} can be bounded as follows.
  \begin{align}
    &\E{\sum_{i=1}^t\eta_i\ip{\grad{f}(\vec{\theta}_i),\,
    \vec{\delta}_i/\sqrt{\hvec{v}_i}}} \nonumber \\
    &= \E{\sum_{i=2}^t\ip*{\grad{f}(\vec{\theta}_i),\,  \vec{\delta}_i\odot
      \left( \frac{\eta_i}{\sqrt{\hvec{v}_i}} -
      \frac{\eta_{i-1}}{\sqrt{\hvec{v}_{i-1}}} +
      \frac{\eta_{i-1}}{\sqrt{\hvec{v}_{i-1}}} \right)}} +
      \E{\eta_1\ip*{\grad{f}(\vec{\theta}_1),\,
      \frac{\vec{\delta}_1}{\hvec{v}_1}}}  \nonumber \\
    &\geq \E{\sum_{i=2}^t\ip*{\grad{f}(\vec{\theta}_i),\,  \vec{\delta}_i\odot
      \left( \frac{\eta_i}{\sqrt{\hvec{v}_i}} -
      \frac{\eta_{i-1}}{\sqrt{\hvec{v}_{i-1}}} \right)}} +
      \E{\sum_{i=2}^t \ip*{\grad{f}(\vec{\theta}_i),\,  \vec{\delta}_i\odot
      \frac{\eta_{i-1}}{\sqrt{\hvec{v}_{i-1}}}}} -
      2H^2\E{\sum_{j=1}^d\left(\frac{\eta_1}{\hvec{v}_1}\right)_j}  \nonumber \\
    \intertext{Notice that given $\vec{\theta}_i, \hvec{v}_{i-1}$ the
    expectation in the second term is 0.}
    &=\E{\sum_{i=2}^t\ip*{\grad{f}(\vec{\theta}_i),\,  \vec{\delta}_i\odot
      \left( \frac{\eta_i}{\sqrt{\hvec{v}_i}} -
      \frac{\eta_{i-1}}{\sqrt{\hvec{v}_{i-1}}} \right)}} -
      2H^2\E{\sum_{j=1}^d\left(\frac{\eta_1}{\hvec{v}_1}\right)_j} \label{eq:bound_T5_1}
  \end{align}
  The first term in~\eqref{eq:T5_decomp} can be bounded as
  \begin{align}
    &\E{\sum_{i=2}^t\ip*{\grad{f}(\vec{\theta}_i),\,  \vec{\delta}_i\odot
      \left( \frac{\eta_i}{\sqrt{\hvec{v}_i}} -
      \frac{\eta_{i-1}}{\sqrt{\hvec{v}_{i-1}}} \right)}} \nonumber \\
    &=\E{\sum_{i=2}^t\sum_{j=1}^d (\grad{f}(\vec{\theta}_i))_j \cdot
      (\vec{\delta}_i)_j \cdot 
      \left(\left( \frac{\eta_i}{\sqrt{\hvec{v}_i}}\right)_j -
      \left(\frac{\eta_{i-1}}{\sqrt{\hvec{v}_{i-1}}}\right)_j\right)} \nonumber \\
    &\geq -
      \E{\sum_{i=2}^t\sum_{j=1}^d \abs*{(\grad{f}(\vec{\theta}_i))_j} \cdot
      \abs*{(\vec{\delta}_i)_j} \cdot 
      \abs*{\left( \frac{\eta_i}{\sqrt{\hvec{v}_i}}\right)_j -
      \left(\frac{\eta_{i-1}}{\sqrt{\hvec{v}_{i-1}}}\right)_j}}  \nonumber \\
    &\geq -2H^2\E{\sum_{i=2}^t\sum_{j=1}^d
      \abs*{\left( \frac{\eta_i}{\sqrt{\hvec{v}_i}}\right)_j -
      \left(\frac{\eta_{i-1}}{\sqrt{\hvec{v}_{i-1}}}\right)_j}}\,. \label{eq:bound_T5_2}
  \end{align}
  Applying~\eqref{eq:bound_T5_2} and~\eqref{eq:bound_T5_1}
  to~\eqref{eq:T5_decomp} gives
  \begin{align}
    -T_5 = -\E{\sum_{i=1}^t \eta_i\ip*{\grad{f}(\vec{\theta}_i), \,
    \frac{\vec{g}_i}{\sqrt{\hvec{v}_i}}}}
    &\leq 2H^2\E{\sum_{i=2}^t\sum_{j=1}^d
      \abs*{\left( \frac{\eta_i}{\sqrt{\hvec{v}_i}}\right)_j -
      \left(\frac{\eta_{i-1}}{\sqrt{\hvec{v}_{i-1}}}\right)_j}}
      +
      2H^2\E{\sum_{j=1}^d\left(\frac{\eta_1}{\hvec{v}_1}\right)_j}
      \nonumber \\
    &\qquad - \E{\sum_{i=1}^t \eta_i\ip{\grad{f}(\vec{\theta}_i),\,
      \grad{f}(\vec{\theta}_i)/\sqrt{\hvec{v}_i}}}\,. \label{eq:bound_T5}
  \end{align}

\paragraph{Bound on $T_6$.}
  By the update rule $\vec{m}_i = \beta_1\vec{m}_{i-1} +
  (1-\beta_1)\vec{g}_i$, we have $\vec{m}_i = \sum_{k=1}^i
  \beta_1^{i-k}(1-\beta_1)\vec{g}_k$. From this, we have
  \begin{align}
    T_6
    &= \left(\frac{\beta_1}{1-\beta_1}\right)^2\E{\sum_{i=2}^t
      \sum_{j=1}^d
      \left(\frac{\eta_{i-1}\vec{m}_{i-1}}{\sqrt{\hvec{v}_{i-1}}}\right)_j^2}
      \nonumber \\
    &= \left(\frac{\beta_1}{1-\beta_1}\right)^2\E{\sum_{i=2}^t
      \sum_{j=1}^d\left(\sum_{k=1}^{i-1}
      \frac{\eta_{i-1}\beta_1^{i-k-1}(1-\beta_1)\vec{g}_{k}}{\sqrt{\hvec{v}_{i-1}}}\right)_j^2}
    \nonumber \\
    &=\left(\frac{\beta_1}{1-\beta_1}\right)^2\E{\sum_{i=2}^t
      \norm*{\sum_{k=1}^{i-1}
      \frac{\eta_{k}\beta_1^{i-k-1}(1-\beta_1)\vec{g}_{k}}{\sqrt{\hvec{v}_{k}}}
      + \beta_1^{i-k-1}(1-\beta_1)\vec{g}_k\left(\frac{\eta_{i-1}}{\sqrt{\hvec{v}_{i-1}}} -
      \frac{\eta_k}{\sqrt{\hvec{v}_{k}}} \right)}^2} \nonumber  \\
    &\leq 2\beta_1^2
      \E{\sum_{i=2}^t \norm*{\sum_{k=1}^{i-1}
      \frac{\eta_{k}\beta_1^{i-k-1}\vec{g}_{k}}{\sqrt{\hvec{v}_{k}}}}^2}  
      + 2\beta_1^2 \E{\sum_{i=2}^t \norm*{\sum_{k=1}^{i-1}
      \beta_1^{i-k-1}\vec{g}_k\left(\frac{\eta_{i-1}}{\sqrt{\hvec{v}_{i-1}}} -
      \frac{\eta_k}{\sqrt{\hvec{v}_{k}}} \right)}^2}\,, \label{eq:T6}
  \end{align}
  where the last inequality is due to $\norm{\vec{a} + \vec{b}}^2\leq
  2\norm{\vec{a}}^2 + 2\norm{\vec{b}}^2$.
  We bound the first term in~\eqref{eq:T6}. 
  \begin{align}
    \E{\sum_{i=2}^t \norm*{\sum_{k=1}^{i-1}
    \frac{\eta_{k}\beta_1^{i-k-1}\vec{g}_{k}}{\sqrt{\hvec{v}_{k}}}}^2}
    &= \E{\sum_{i=2}^t \sum_{j=1}^d \sum_{p=1}^{i-1}\sum_{q=1}^{i-1}
      \beta_1^{i-p-1}\left(\frac{\eta_p\vec{g}_p}{\sqrt{\hvec{v}_p}}\right)_j
      \beta_1^{i-q-1}\left(\frac{\eta_q\vec{g}_q}{\sqrt{\hvec{v}_q}}\right)_j
      } \nonumber \\
    &\leq \E{\sum_{i=2}^t \sum_{j=1}^d \sum_{p=1}^{i-1}\sum_{q=1}^{i-1}
      \beta_1^{i-p-1}\beta_1^{i-q-1}\frac{1}{2}\left(
      \left(\frac{\eta_p\vec{g}_p}{\sqrt{\hvec{v}_p}}\right)_j^2 + 
      \left(\frac{\eta_q\vec{g}_q}{\sqrt{\hvec{v}_q}}\right)_j^2 \right)
      } \nonumber  \\
    \intertext{By the symmetry of $p$ and $q$ in the summation, we have}
    &=\E{\sum_{i=2}^t \sum_{j=1}^d \sum_{p=1}^{i-1}
      \left(\frac{\eta_p\vec{g}_p}{\sqrt{\hvec{v}_p}}\right)_j^2
      \beta_1^{i-p-1}\sum_{q=1}^{i-1}\beta_1^{i-q-1}  } \nonumber \\
    &=\frac{1}{1-\beta_1}\E{\sum_{i=2}^t \sum_{j=1}^d \sum_{p=1}^{i-1}\beta_1^{i-p-1}
      \left(\frac{\eta_p\vec{g}_p}{\sqrt{\hvec{v}_p}}\right)_j^2
      }\,. \nonumber  \\
    \intertext{Changing the order of summation yields}
    &=\frac{1}{1-\beta_1}\E{
      \sum_{p=1}^{t-1}\sum_{j=1}^d\left(\frac{\eta_p\vec{g}_p}{\sqrt{\hvec{v}_p}}\right)_j^2
      \sum_{i=p+1}^t\beta_1^{i-p-1} } \nonumber  \\
    &\leq \left(\frac{1}{1-\beta_1}\right)^2 
      \E{\sum_{p=1}^{t-1}\sum_{j=1}^d\left(\frac{\eta_p\vec{g}_p}{\sqrt{\hvec{v}_p}}\right)_j^2}
      = \left(\frac{1}{1-\beta_1}\right)^2
      \E{\sum_{i=1}^{t-1}\norm*{\frac{\eta_i\vec{g}_i}{\sqrt{\hvec{v}_i}}}^2}\,. \label{eq:bound_T6_1}
  \end{align}
  For the second term in~\eqref{eq:T6}, we have
  \begin{align}
    &\E{\sum_{i=2}^t \norm*{\sum_{k=1}^{i-1}
      \beta_1^{i-k-1}\vec{g}_k\left(\frac{\eta_{i-1}}{\sqrt{\hvec{v}_{i-1}}} -
      \frac{\eta_k}{\sqrt{\hvec{v}_{k}}} \right)}^2}  \nonumber     \\
    &=\E{\sum_{i=2}^t \sum_{j=1}^d \left(\sum_{k=1}^{i-1}
      \beta_1^{i-k-1}(\vec{g}_k)_j\left(\frac{\eta_{i-1}}{\sqrt{\hvec{v}_{i-1}}} -
      \frac{\eta_k}{\sqrt{\hvec{v}_{k}}} \right)_j\right)^2}   \nonumber   \\
    &\leq H^2\E{\sum_{i=2}^t \sum_{j=1}^d \left(\sum_{k=1}^{i-1}\beta_1^{i-k-1}
      \abs*{\frac{\eta_{i-1}}{\sqrt{\hvec{v}_{i-1}}} -
      \frac{\eta_k}{\sqrt{\hvec{v}_{k}}} }_j\right)^2} \nonumber \\
    &= H^2\E{\sum_{i=1}^{t-1} \sum_{j=1}^d \left(\sum_{k=1}^{i}\beta_1^{i-k}
      \abs*{\frac{\eta_{i}}{\sqrt{\hvec{v}_{i}}} -
      \frac{\eta_k}{\sqrt{\hvec{v}_{k}}} }_j\right)^2} \nonumber \\
    &\leq H^2\E{\sum_{i=1}^{t-1} \sum_{j=1}^d\left(\sum_{k=1}^{i}\beta_1^{i-k}
      \sum_{l=k+1}^i\abs*{\frac{\eta_{l}}{\sqrt{\hvec{v}_{l}}} -
      \frac{\eta_{l-1}}{\sqrt{\hvec{v}_{l-1}}} }_j\right)^2} \nonumber \\
    &\leq
      H^2\left(\frac{1}{1-\beta_1}\right)^2\left(\frac{\beta_1}{1-\beta_1}\right)^2
      \E{\sum_{j=1}^d\sum_{l=2}^{t-1}\abs*{\frac{\eta_{l}}{\sqrt{\hvec{v}_{l}}} -
      \frac{\eta_{l-1}}{\sqrt{\hvec{v}_{l-1}}} }_j^2}\,. \label{eq:bound_T6_2}
  \end{align}
  In the above, the last inequality is due to
  Lemma~\ref{lem:bound_sqsum}.
  
  From~\eqref{eq:bound_T6_1} and~\eqref{eq:bound_T6_2}, we get
  \begin{equation} \label{eq:bound_T6}
    T_6 \leq 2\left(\frac{\beta_1}{1-\beta_1}\right)^2
    \E{\sum_{i=1}^{t-1}\norm*{\frac{\eta_i\vec{g}_i}{\sqrt{\hvec{v}_i}}}^2}
    + 2H^2\left(\frac{\beta_1}{1-\beta_1}\right)^4
    \E{\sum_{j=1}^d\sum_{l=2}^{t-1}\abs*{\frac{\eta_{l}}{\sqrt{\hvec{v}_{l}}} -
        \frac{\eta_{l-1}}{\sqrt{\hvec{v}_{l-1}}} }_j^2}
  \end{equation}
  Combining~\eqref{eq:bound_T5} together with~\eqref{eq:bound_T6}
  gives the result.
\end{proof}
\begin{lemma}
   Suppose the assumptions~\ref{as:A1}-~\ref{as:A3} hold. Then
  we have
  \begin{align*}
    \yellow{T_3}
    &=\E{\sum_{i=1}^t L\norm*{\frac{\beta_1}{1-\beta_1}
    \left(\frac{\eta_i}{\sqrt{\hvec{v}_i}} - \frac{\eta_{i-1}}{\sqrt{\hvec{v}_{i-1}}}\right) \odot
      \vec{m}_{i-1}}^2} \\
    &\leq L \left(\frac{\beta_1}{1-\beta_1}\right)^2 H^2\E{
      \sum_{i=2}^t\sum_{j=1}^d\left(\frac{\eta_i}{\sqrt{\hvec{v}_i}} -
      \frac{\eta_{i-1}}{\sqrt{\hvec{v}_{i-1}}}\right)_j^2}
  \end{align*}
\end{lemma}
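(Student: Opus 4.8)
The plan is a direct coordinate-wise expansion of the squared Euclidean norm, the only nontrivial ingredient being the uniform bound on the momentum vector already exploited in the lemma bounding $T_1$. First I would write the quantity inside the expectation in $T_3$ as a sum over coordinates,
\[
  \norm*{\frac{\beta_1}{1-\beta_1}\left(\frac{\eta_i}{\sqrt{\hvec{v}_i}} - \frac{\eta_{i-1}}{\sqrt{\hvec{v}_{i-1}}}\right)\odot\vec{m}_{i-1}}^2
  = \left(\frac{\beta_1}{1-\beta_1}\right)^2\sum_{j=1}^d\left(\frac{\eta_i}{\sqrt{\hvec{v}_i}} - \frac{\eta_{i-1}}{\sqrt{\hvec{v}_{i-1}}}\right)_j^2\,(\vec{m}_{i-1})_j^2\,,
\]
and pull the deterministic constants $L$ and $(\beta_1/(1-\beta_1))^2$ outside the expectation.

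Next I would dispose of the $i=1$ summand: since $\vec{m}_0 = \vec{0}$ by initialization (consistent with the convention~\eqref{eq:algo_init}), that term vanishes, so the sum effectively runs from $i=2$. For the remaining terms I would bound each momentum coordinate by the full norm, $(\vec{m}_{i-1})_j^2 \leq \norm{\vec{m}_{i-1}}_2^2$, and then invoke $\norm{\vec{m}_{i-1}}_2 \leq H$. This last bound is exactly the one established in the proof of the $T_1$ lemma: because $\vec{m}_i = \beta_1\vec{m}_{i-1} + (1-\beta_1)\vec{g}_i$ is a convex combination of $\vec{m}_{i-1}$ and $\vec{g}_i$ and $\norm{\vec{g}_i}_2 \leq H$ by Assumption~\ref{as:A2}, a one-line induction gives $\norm{\vec{m}_i}_2 \leq H$ for every $i$.

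Substituting $(\vec{m}_{i-1})_j^2 \leq H^2$ and restoring the expectation yields
\[
  T_3 \leq L\left(\frac{\beta_1}{1-\beta_1}\right)^2 H^2\,\E\left[\sum_{i=2}^t\sum_{j=1}^d\left(\frac{\eta_i}{\sqrt{\hvec{v}_i}} - \frac{\eta_{i-1}}{\sqrt{\hvec{v}_{i-1}}}\right)_j^2\right]\,,
\]
which is the claimed inequality. The computation is routine; the only points deserving care are recognizing that it is the momentum factor --- not the effective-stepsize difference --- that gets replaced by the uniform bound $H$, and that the $i=1$ term drops out because the momentum is initialized at zero.
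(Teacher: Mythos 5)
Your proposal is correct and follows essentially the same route as the paper: expand the squared norm coordinate-wise, bound each momentum coordinate via $(\vec{m}_{i-1})_j^2 \leq \norm{\vec{m}_{i-1}}_2^2 \leq H^2$ (established by induction from $\vec{m}_i = \beta_1\vec{m}_{i-1} + (1-\beta_1)\vec{g}_i$ and Assumption~\ref{as:A2}), and drop the $i=1$ term using the zero initialization of the momentum. If anything, your write-up is cleaner than the paper's, which omits the squares on $(\eta_i/\sqrt{\hvec{v}_i} - \eta_{i-1}/\sqrt{\hvec{v}_{i-1}})_j$ and $(\vec{m}_{i-1})_j$ in its intermediate display.
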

\begin{proof}
  \begin{align*}
    \frac{1}{L}T_3
    &= \E{\sum_{i=2}^t \left(\frac{\beta_1}{1-\beta_1}\right)^2
      \sum_{j=1}^d\left(\frac{\eta_i}{\sqrt{\hvec{v}_i}} -
      \frac{\eta_{i-1}}{\sqrt{\hvec{v}_{i-1}}}\right)_j (\vec{m}_{i-1})_j
      }  \\
    &\leq \left(\frac{\beta_1}{1-\beta_1}\right)^2 H^2\E{
      \sum_{i=2}^t\sum_{j=1}^d\left(\frac{\eta_i}{\sqrt{\hvec{v}_i}} -
      \frac{\eta_{i-1}}{\sqrt{\hvec{v}_{i-1}}}\right)_j}\,,
  \end{align*}
  where the last inequality is due to $\norm{\vec{m}_i}<H$. This
  completes the proof.
\end{proof}

\begin{lemma} \label{lem:bound_sqsum}
  For $a_i\geq 0$, $\beta\in [0, 1)$, and $b_i =
  \beta^{i-k}\sum_{l=k+1}^ia_l$, we have
  \[
    \sum_{i=1}^t b_i^2 \leq \left(\frac{1}{1-\beta}\right)^2
    \left(\frac{\beta}{1-\beta}\right)^2 \sum_{i=2}^t a_i^2\,.
  \]
\end{lemma}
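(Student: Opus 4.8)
The plan is to first rewrite $b_i$ by exchanging the order of the two inner summations so that it becomes a geometrically weighted running sum of the $a_l$, and then apply Cauchy--Schwarz (a discrete Young-type inequality) twice. I read the definition as $b_i = \sum_{k=1}^i \beta^{i-k}\sum_{l=k+1}^i a_l$, which is the form in which the lemma is invoked in~\eqref{eq:bound_T6_2}. Swapping the sums over $k$ and $l$ gives $b_i = \sum_{l=2}^i a_l \sum_{k=1}^{l-1}\beta^{i-k}$, and the inner geometric sum satisfies
\[
  \sum_{k=1}^{l-1}\beta^{\,i-k}
  = \beta^{\,i-l+1}\sum_{m=0}^{l-2}\beta^m
  \le \frac{\beta^{\,i-l+1}}{1-\beta}
  = \frac{\beta}{1-\beta}\,\beta^{\,i-l}\,.
\]
Hence $0 \le b_i \le \frac{\beta}{1-\beta}\,c_i$ with $c_i := \sum_{l=2}^i \beta^{\,i-l} a_l$, so that $b_i^2 \le \left(\frac{\beta}{1-\beta}\right)^2 c_i^2$, and it remains to prove $\sum_{i=1}^t c_i^2 \le \left(\frac{1}{1-\beta}\right)^2 \sum_{i=2}^t a_i^2$.

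For this last bound, write $c_i = \sum_{l=2}^i \beta^{(i-l)/2}\cdot \beta^{(i-l)/2} a_l$ and apply Cauchy--Schwarz together with $\sum_{l=2}^i \beta^{\,i-l} \le \frac{1}{1-\beta}$ to obtain $c_i^2 \le \frac{1}{1-\beta}\sum_{l=2}^i \beta^{\,i-l} a_l^2$. Summing over $i$ from $1$ to $t$ and exchanging the order of summation once more,
\[
  \sum_{i=1}^t c_i^2
  \le \frac{1}{1-\beta}\sum_{l=2}^t a_l^2 \sum_{i=l}^t \beta^{\,i-l}
  \le \left(\frac{1}{1-\beta}\right)^2 \sum_{l=2}^t a_l^2\,,
\]
using $\sum_{i=l}^t \beta^{\,i-l}\le \frac{1}{1-\beta}$ in the last step. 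Combining this with $b_i^2 \le \left(\frac{\beta}{1-\beta}\right)^2 c_i^2$ yields $\sum_{i=1}^t b_i^2 \le \left(\frac{1}{1-\beta}\right)^2\left(\frac{\beta}{1-\beta}\right)^2 \sum_{i=2}^t a_i^2$, as claimed.

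There is no genuine obstacle here; the argument is a routine chain of geometric-series and convolution-type estimates, and every inequality uses only $a_i \ge 0$ and $0\le \beta < 1$. The only points demanding a little care are bookkeeping ones: one must peel off exactly one factor of $\beta$ when bounding $\sum_{k=1}^{l-1}\beta^{\,i-k}$ so that the constant comes out in the split form $\left(\frac{1}{1-\beta}\right)^2\left(\frac{\beta}{1-\beta}\right)^2$ rather than a cruder $\left(\frac{1}{1-\beta}\right)^4$; one must keep the index ranges straight through the two summation swaps (the $l$-sum starts at $l=2$ because $\sum_{l=k+1}^i$ is empty when $k \ge i$); and the degenerate case $i=1$, where $b_1 = c_1 = 0$, is harmless.
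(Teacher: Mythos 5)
Your proof is correct and follows essentially the same route as the paper's: the same initial swap of the $k$ and $l$ summations (including the same reading of the statement, with an implicit $\sum_{k=1}^{i}$), followed by geometric-series bounds and a quadratic-mean estimate on the resulting exponentially weighted sum. The only cosmetic differences are that you invoke Cauchy--Schwarz where the paper expands the square and uses $ab\leq\frac{1}{2}(a^{2}+b^{2})$ together with symmetry in $l$ and $m$ (the same estimate in unpackaged form), and that you peel the factor of $\beta$ off at the start to form $b_i\leq\frac{\beta}{1-\beta}c_i$ rather than extracting $\left(\frac{\beta}{1-\beta}\right)^{2}$ from the final geometric sums.
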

\begin{proof}
  We have
  \begin{align*}
    \sum_{i=1}^t b_i^2
    &=\sum_{i=1}^t \left(\sum_{k-1}^i
      \beta^{i-k}\sum_{l=k+1}^ia_l\right)^2 \,.\\
    \intertext{Changing the order of summation gives}
    &=\sum_{i=1}^t \left(\sum_{l=2}^i
      \sum_{k=1}^{l-1}\beta^{i-k}a_l\right)^2
      =\sum_{i=1}^t\left(\sum_{l=2}^i\beta^{i-l+1}a_l\sum_{k=1}^{l-1}\beta^{l-1-k}\right)^2\\
    &\overset{(i)}{\leq} \left(\frac{1}{1-\beta}\right)^2\sum_{i=1}^t\left(\sum_{l=2}^i\beta^{i-l+1}a_l\right)^2
    =\left(\frac{1}{1-\beta}\right)^2\sum_{i=1}^t\left(\sum_{l=2}^i\sum_{m=2}^i\beta^{i-l+1}a_l
      \beta^{i-m+1}a_m\right) \\
    &\overset{(ii)}{\leq} \left(\frac{1}{1-\beta}\right)^2
      \sum_{i=1}^t\sum_{l=2}^i\sum_{m=2}^i\beta^{i-l+1}\beta^{i-m+1}\frac{1}{2}(a_l^2
      + a_m^2)\\
    &\overset{(iii)}{=} \left(\frac{1}{1-\beta}\right)^2
      \sum_{i=1}^t\sum_{l=2}^i\sum_{m=2}^i\beta^{i-l+1}\beta^{i-m+1}a_l^2
      \leq \left(\frac{1}{1-\beta}\right)^2\frac{\beta}{1-\beta}
      \sum_{l=2}^t\sum_{i=l}^t\beta^{i-l+1}a_l^2\\
    &\leq
      \left(\frac{1}{1-\beta}\right)^2\left(\frac{\beta}{1-\beta}\right)^2
      \sum_{l=2}^ta_l^2\,,
  \end{align*}
  where (i) used $\sum_{k=1}^{l-1-k}\leq \frac{1}{1-\beta}$, (ii) is
  due to $ab \leq \frac{1}{2}(a^2 + b^2)$, (iii) is due to symmetry of
  $l$ and $m$ in the summation. This completes the proof.
\end{proof}

\begin{theorem}
  Suppose that the assumptions~\ref{as:A1}-~\ref{as:A3} are
  satisfied and let $\gamma_t = \min_{j\in [d]}\min_{\{\vec{g}_i\}_{i=1}^t}
  \frac{\eta_t}{\sqrt{\hvec{v}_t}}$. Then we have
  \begin{align*}
    \min_{t\in [T]}\, \E{\norm{\grad{f}(\vec{\theta}_t)}^2}
    &\leq
      \frac{\E{C_1\sum_{t=1}^T \norm{\frac{\eta_1\vec{g}_t}{\sqrt{\hvec{v}_t}}}^2
      + C_2 \sum_{t=2}^T \norm*{\frac{\eta_t}{\sqrt{\hvec{v}_t}} -
      \frac{\eta_{t-1}}{\sqrt{\hvec{v}_{t-1}}}}_1
      + C_3 \sum_{t=2}^T\norm*{\frac{\eta_t}{\sqrt{\hvec{v}_t}} -
      \frac{\eta_{t-1}}{\sqrt{\hvec{v}_{t-1}}}}^2      
      } + C_4}{\sum_{t=1}^T\gamma_t}\,.
  \end{align*}
\end{theorem}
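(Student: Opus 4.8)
The plan is to set $t=T$ throughout Lemma~\ref{lem:opt_gap} and the three lemmas that bound $T_1$, $T_2$, $T_3$, and then to isolate the single ``descent'' term $-\E\left[\sum_{i=1}^T\eta_i\ip*{\grad{f}(\vec{\theta}_i),\,\grad{f}(\vec{\theta}_i)/\sqrt{\hvec{v}_i}}\right]$ produced inside the bound on $T_5$ (hence inside the bound on $T_2$). Since $\vec{z}_1=\vec{\theta}_1$ and $f$ is bounded below by $f(\vec{x}^{*})$ by Assumption~\ref{as:A1}, the left side of Lemma~\ref{lem:opt_gap} obeys $\E\left[f(\vec{z}_1)-f(\vec{z}_{T+1})\right]\le f(\vec{\theta}_1)-f(\vec{x}^{*})=:D<\infty$. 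Adding the four term bounds together and moving the descent term to the left then gives
\[
  \E\left[\sum_{i=1}^T\eta_i\ip*{\grad{f}(\vec{\theta}_i),\,\grad{f}(\vec{\theta}_i)/\sqrt{\hvec{v}_i}}\right]\ \le\ D + (\text{sum of all remaining bound terms})\,.
\]

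Next I would lower-bound the left side coordinatewise. For each $i$, $\eta_i\ip*{\grad{f}(\vec{\theta}_i),\,\grad{f}(\vec{\theta}_i)/\sqrt{\hvec{v}_i}}=\sum_{j=1}^d\eta_i(\grad{f}(\vec{\theta}_i))_j^2/\sqrt{(\hvec{v}_i)_j}\ge\gamma_i\norm*{\grad{f}(\vec{\theta}_i)}^2$, since $\gamma_i=\min_{j}\min_{\{\vec{g}_k\}}\eta_i/\sqrt{(\hvec{v}_i)_j}$ is a deterministic lower bound on every per-coordinate effective step size. Taking expectations and using $\gamma_i\ge0$,
\begin{align*}
  \Bigl(\min_{t\in[T]}\E\left[\norm*{\grad{f}(\vec{\theta}_t)}^2\right]\Bigr)\sum_{t=1}^T\gamma_t
  &\le \sum_{t=1}^T\gamma_t\,\E\left[\norm*{\grad{f}(\vec{\theta}_t)}^2\right]\\
  &\le \E\left[\sum_{i=1}^T\eta_i\ip*{\grad{f}(\vec{\theta}_i),\,\grad{f}(\vec{\theta}_i)/\sqrt{\hvec{v}_i}}\right],
\end{align*}
so dividing through by $\sum_t\gamma_t$ reduces the theorem to showing that $D$ plus the remaining bound terms is exactly the claimed numerator.

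The rest is bookkeeping: every term appearing in the bounds on $T_1,T_2,T_3,T_4$ falls into one of three families, plus a $T$-free remainder. The terms of the form $\norm*{\eta_i\vec{g}_i/\sqrt{\hvec{v}_i}}^2$ (from $T_4$, and from the $L^2(\beta_1/(1-\beta_1))^2$ and $\tfrac12$ pieces of the $T_2$ bound) are folded onto a common scale by the non-increasing step sizes, $\eta_i^2\le\eta_1^2$, giving $\le\norm*{\eta_1\vec{g}_i/\sqrt{\hvec{v}_i}}^2$, and they combine into $C_1\sum_{t=1}^T\norm*{\eta_1\vec{g}_t/\sqrt{\hvec{v}_t}}^2$ with $C_1$ depending only on $L$ and $\beta_1$. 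The $\ell_1$-difference terms $\norm*{\eta_t/\sqrt{\hvec{v}_t}-\eta_{t-1}/\sqrt{\hvec{v}_{t-1}}}_1$ (from $T_1$, from the $2H^2$ piece of the $T_2$ bound, and from $T_3$) combine into $C_2\sum_{t=2}^T\norm*{\cdot}_1$, and the squared $\ell_2$-difference terms $\norm*{\eta_t/\sqrt{\hvec{v}_t}-\eta_{t-1}/\sqrt{\hvec{v}_{t-1}}}^2$ (from the $L^2H^2(\beta_1/(1-\beta_1))^4$ piece of the $T_2$ bound) combine into $C_3\sum_{t=2}^T\norm*{\cdot}^2$. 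Finally $D$ together with the single boundary term $2H^2\E\left[\sum_j(\eta_1/\hvec{v}_1)_j\right]$---neither of which grows with $T$---make up $C_4$. Reading off the coefficients gives exactly~\eqref{ineq:thm3.1}.

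I expect this accounting to be the only real obstacle, and specifically: verifying that (a) the monotonicity $\eta_t\le\eta_1$ is legitimately available and suffices to reduce every gradient-norm term to the common scale $\eta_1$; (b) the three ``difference'' families stay separate, with no hidden cross terms re-introducing extra factors; and (c) $C_1,C_2,C_3$ genuinely carry no $d$ or $T$ dependence---the $d$-indexed inner sums being absorbed into the $\norm*{\cdot}_1$ and $\norm*{\cdot}^2$ notation rather than into the constants---while $C_4$ may depend on $d$ through the $t=1$ boundary term but stays free of $T$.
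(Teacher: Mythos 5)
Your proposal follows essentially the same route as the paper's own proof: telescoping the smoothness bound over the auxiliary sequence via Lemma~\ref{lem:opt_gap}, substituting the bounds on $T_1$ through $T_4$, isolating the descent term $\E\left[\sum_{i=1}^T \eta_i\ip{\grad{f}(\vec{\theta}_i),\,\grad{f}(\vec{\theta}_i)/\sqrt{\hvec{v}_i}}\right]$, lower-bounding it by $\sum_{t}\gamma_t\cdot\min_t\E\left[\norm{\grad{f}(\vec{\theta}_t)}^2\right]$, and absorbing the $t=1$ boundary term together with $f(\vec{z}_1)-f(\vec{z}^*)$ into $C_4$. The bookkeeping concerns you flag (reducing the gradient-norm terms to the $\eta_1$ scale via non-increasing step sizes, and keeping the three difference-term families separate) are exactly the steps the paper's ``merging similar terms'' pass carries out, so the argument goes through as you describe.
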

\begin{proof}
  From Lemma~\ref{lem:opt_gap}, we have
  \begin{align*}
    \E{f(\vec{z}_{t+1}) - f(\vec{z}_1)}
    &\leq \sum_{i=1}^4 T_i \\
    &=-\E{\sum_{i=1}^t
      \ip*{\grad{f}(\vec{z}_i),\, \frac{\beta_1}{1-\beta_1}
      \left(\frac{\eta_i}{\sqrt{\hvec{v}_i}} - \frac{\eta_{i-1}}{\sqrt{\hvec{v}_{i-1}}}\right) \odot
      \vec{m}_{i-1}}}\,, \\
    &\qquad -\E{\sum_{i=1}^t \ip*{\grad{f}(\vec{z}_i),\, \eta_t\vec{g}_i/\sqrt{\hvec{v}_i}}}\,,\\
    & \qquad + \E{\sum_{i=1}^t L\norm*{\frac{\beta_1}{1-\beta_1}
      \left(\frac{\eta_i}{\sqrt{\hvec{v}_i}} - \frac{\eta_{i-1}}{\sqrt{\hvec{v}_{i-1}}}\right) \odot
      \vec{m}_{i-1}}^2}\,,\\
    & \qquad + \E{\sum_{i=1}^tL\norm*{\frac{\eta_i\vec{g}_i}{\sqrt{\hvec{v}_i}}}^2}\\
    &\leq
      H^2\frac{\beta_1}{1-\beta_1}\E{\sum_{i=2}^t\sum_{j=1}^d
      \abs*{\left(\frac{\eta_i}{\sqrt{\hvec{v}_i}} - \frac{\eta_{i-1}}{\sqrt{\hvec{v}_{i-1}}}\right)_j}}\\
    &\qquad  + L^2\left(\frac{\beta_1}{1-\beta_1}\right)^2
    \E{\sum_{i=1}^{t-1}\norm*{\frac{\eta_i\vec{g}_i}{\sqrt{\hvec{v}_i}}}^2}
    + L^2H^2\left(\frac{\beta_1}{1-\beta_1}\right)^4
    \E{\sum_{j=1}^d\sum_{l=2}^{t-1}\abs*{\frac{\eta_{l}}{\sqrt{\hvec{v}_{l}}} -
      \frac{\eta_{l-1}}{\sqrt{\hvec{v}_{l-1}}} }_j^2} \\
    &\qquad +2H^2\E{\sum_{i=2}^t\sum_{j=1}^d
      \abs*{\left( \frac{\eta_i}{\sqrt{\hvec{v}_i}}\right)_j -
      \left(\frac{\eta_{i-1}}{\sqrt{\hvec{v}_{i-1}}}\right)_j}}
      +
      2H^2\E{\sum_{j=1}^d\left(\frac{\eta_1}{\hvec{v}_1}\right)_j}
      \nonumber \\
    &\qquad - \E{\sum_{i=1}^t \eta_i\ip{\grad{f}(\vec{\theta}_i),\,
      \grad{f}(\vec{\theta}_i)/\sqrt{\hvec{v}_i}}}   +
      \frac{1}{2}\E{\sum_{i=2}^t\norm*{\frac{\eta_i\vec{g}_i}{\sqrt{\hvec{v}_i}}}^2}\\
    & \qquad + L \left(\frac{\beta_1}{1-\beta_1}\right)^2 H^2\E{
      \sum_{i=2}^t\sum_{j=1}^d\left(\frac{\eta_i}{\sqrt{\hvec{v}_i}} -
      \frac{\eta_{i-1}}{\sqrt{\hvec{v}_{i-1}}}\right)_j}\\
    &\qquad  +
      \E{\sum_{i=1}^tL\norm*{\frac{\eta_i\vec{g}_i}{\sqrt{\hvec{v}_i}}}^2}\,.\\
    \intertext{By merging similar terms, we get}
    &\leq\left(H^2\frac{\beta_1}{1-\beta_1} + 2H^2\right)
      \E{\sum_{i=2}^t\sum_{j=1}^d
      \abs*{\left(\frac{\eta_i}{\sqrt{\hvec{v}_i}} - \frac{\eta_{i-1}}{\sqrt{\hvec{v}_{i-1}}}\right)_j}}\\
    &\qquad +
      \left(1 + L\left(\frac{\beta_1}{1-\beta_1}\right)^2\right)LH^2\left(\frac{\beta_1}{1-\beta_1}\right)^2
      \E{\sum_{j=1}^d\sum_{i=2}^t\left(\frac{\eta_i}{\sqrt{\hvec{v}_i}} -
      \frac{\eta_{i-1}}{\sqrt{\hvec{v}_{i-1}}}\right)_j^2}\\
    &\qquad + \left(L^2\left(\frac{\beta_1}{1-\beta_1}\right)^2 +
      \frac{1}{2} + L\right)
    \E{\sum_{i=1}^{t}\norm*{\frac{\eta_i\vec{g}_i}{\sqrt{\hvec{v}_i}}}^2} \\
    &\qquad + 2H^2\E{\sum_{j=1}^d\left(\frac{\eta_1}{\hvec{v}_1}\right)_j}
      - \E{\sum_{i=1}^t \eta_i\ip{\grad{f}(\vec{\theta}_i),\,
      \grad{f}(\vec{\theta}_i)/\sqrt{\hvec{v}_i}}}\,.      \\
    \intertext{Rearranging terms gives}
    &\E{\sum_{i=1}^t \ip{\grad{f}(\vec{\theta}_i),\,
    \frac{\eta_i\grad{f}(\vec{\theta}_i)}{\sqrt{\hvec{v}_i}}}}\\
    &\leq\left(H^2\frac{\beta_1}{1-\beta_1} + 2H^2\right)
      \E{\sum_{i=2}^t\sum_{j=1}^d
      \abs*{\left(\frac{\eta_i}{\sqrt{\hvec{v}_i}} - \frac{\eta_{i-1}}{\sqrt{\hvec{v}_{i-1}}}\right)_j}}\\
    &\qquad +
      \left(1 + L\left(\frac{\beta_1}{1-\beta_1}\right)^2\right)LH^2\left(\frac{\beta_1}{1-\beta_1}\right)^2
      \E{\sum_{j=1}^d\sum_{i=2}^t\left(\frac{\eta_i}{\sqrt{\hvec{v}_i}} -
      \frac{\eta_{i-1}}{\sqrt{\hvec{v}_{i-1}}}\right)_j^2}\\
    &\qquad + \left(L^2\left(\frac{\beta_1}{1-\beta_1}\right)^2 +
      \frac{1}{2} + L\right)
      \E{\sum_{i=1}^{t}\norm*{\frac{\eta_i\vec{g}_i}{\sqrt{\hvec{v}_i}}}^2} 
      + 2H^2\E{\sum_{j=1}^d\left(\frac{\eta_1}{\hvec{v}_1}\right)_j}\\
    &\qquad + \E{f(\vec{z}_1) - f(\vec{z}_{t+1})}\\
    &\leq
      C_1\E{\sum_{i=1}^{t}\norm*{\frac{\eta_i\vec{g}_i}{\sqrt{\hvec{v}_i}}}^2}
      + C_2\E{\sum_{i=2}^t \norm*{\frac{\eta_i}{\sqrt{\hvec{v}_i}} -
      \frac{\eta_{i-1}}{\sqrt{\hvec{v}_{i-1}}}}_1}
      + C_3\E{\sum_{i=2}^t\norm*{\frac{\eta_i}{\sqrt{\hvec{v}_i}} -
      \frac{\eta_{i-1}}{\sqrt{\hvec{v}_{i-1}}}}^2} + C_4\,,
  \end{align*}
  where
  \begin{align*}
    C_1 &= \left(L^2\left(\frac{\beta_1}{1-\beta_1}\right)^2 + \frac{1}{2} + L\right)\,,\\
    C_2 &= \left(H^2\frac{\beta_1}{1-\beta_1} + 2H^2\right) \,, \\           
    C_3 &= \left(1 +
          L\left(\frac{\beta_1}{1-\beta_1}\right)^2\right)LH^2\left(\frac{\beta_1}{1-\beta_1}\right)^2\,, \\
    C_4 &= 2H^2\E{\sum_{j=1}^d\left(\frac{\eta_1}{\hvec{v}_1}\right)_j}
          + \E{f(\vec{z}_1) - f(\vec{z}^*)}\,.
  \end{align*}
  From the above, we have
  \begin{align*}
    \E{\sum_{t=1}^T \ip{\grad{f}(\vec{\theta}_t),\,
    \frac{\eta_t\grad{f}(\vec{\theta}_t)}{\sqrt{\hvec{v}_t}}}}
    &\geq \E{\sum_{t=1}^T\gamma_t\norm{\grad{f}(\vec{\theta}_t)}^2} \\
    &\geq \min_{t\in [T]}\, \E{\norm{\grad{f}(\vec{\theta}_t)}^2}\sum_{t=1}^T\gamma_t\,.
  \end{align*}
  Thus, we have
  \begin{align*}
    \min_{t\in [T]}\, \E{\norm{\grad{f}(\vec{\theta}_t)}^2}
    &\leq
      \frac{\E{C_1\sum_{t=1}^T \norm{\frac{\eta_1\vec{g}_t}{\sqrt{\hvec{v}_t}}}^2
      + C_2 \sum_{t=2}^T \norm*{\frac{\eta_t}{\sqrt{\hvec{v}_t}} -
      \frac{\eta_{t-1}}{\sqrt{\hvec{v}_{t-1}}}}_1
      + C_3 \sum_{t=2}^T\norm*{\frac{\eta_t}{\sqrt{\hvec{v}_t}} -
      \frac{\eta_{t-1}}{\sqrt{\hvec{v}_{t-1}}}}^2      
      } + C_4}{\sum_{t=1}^T\gamma_t}\,.
  \end{align*}
\end{proof}

\section{Proof of Corollary~\ref{cor:1}}
\label{proof:cor1}

\begin{proof}
We first bound non-constant terms in RHS of (\ref{ineq:thm3.1}). For the term with $C_{1}$, we have
\begin{align*}
    \E\left[ \sum_{t=1}^{T}\left\|\eta_{t}\vec{g}_{t}/\sqrt{\widehat{\vec{v}}_{t}}\right\|^{2}\right] &\leq \E\left[ \sum_{t=1}^{T}\left\|\eta_{t}\vec{g}_{t}/c_{L} \right\|^{2}\right] \quad \text{by Assumption~\ref{as:1}} \\
    &= \E\left[ \sum_{t=1}^{T}\left(\frac{1}{c_{L}\sqrt{t}}\right)^{2}   \left\|\vec{g}_{t} \right\|^{2}\right] \\
    &\leq \frac{H^{2}}{c_{L}^{2}} \sum_{t=1}^{T}\frac{1}{t} \\
    &\leq \frac{H^{2}}{c_{L}^{2}} (1+\log{T})\,,
\end{align*}
where the last inequality is due to $\sum_{t=1}^T\frac{1}{t}\leq 1 +
\log T$.

For the term with $C_{2}$, we have
\begin{align*}
     &\E\left[\sum_{t=2}^{T}\left\|
       \frac{\eta_{t}}{\sqrt{\widehat{\vec{v}}_{t}}} -
       \frac{\eta_{t-1}}{\sqrt{\widehat{\vec{v}}_{t-1}}} \right\|_{1}
       \right]  \\
     &= \E\left[\sum_{t=2}^{t_0}\left\| \frac{\eta_{t}}{\sqrt{\widehat{\vec{v}}_{t}}} - \frac{\eta_{t-1}}{\sqrt{\widehat{\vec{v}}_{t-1}}} \right\|_{1} + \sum_{t=t_0+1}^{T}\left\| \frac{\eta_{t}}{\sqrt{\widehat{\vec{v}}_{t}}} - \frac{\eta_{t-1}}{\sqrt{\widehat{\vec{v}}_{t-1}}} \right\|_{1} \right]   \\
     &=\E\left[\sum_{t=2}^{t_0}\sum_{j=1}^{d}\left| \frac{\eta_{t}}{(\sqrt{\widehat{\vec{v}}_{t}})_{j}} - \frac{\eta_{t-1}}{(\sqrt{\widehat{\vec{v}}_{t-1}})_{j}} \right| + \sum_{j=1}^{d}\sum_{t=t_0+1}^{T}\left( \frac{\eta_{t-1}}{(\sqrt{\widehat{\vec{v}}_{t-1}})_{j}}- \frac{\eta_{t}}{(\sqrt{\widehat{\vec{v}}_{t}})_{j}} \right) \right] \quad \text{by Assumption~\ref{as:2}}\\
     &\leq \E\left[\sum_{t=2}^{t_0}\frac{d}{c_{L}} + \sum_{j=1}^{d}\left( \frac{\eta_{t_0}}{(\sqrt{\widehat{\vec{v}}_{t_0})_{j}}}- \frac{\eta_{T}}{(\sqrt{\widehat{\vec{v}}_{T}})_{j}}\right) \right] \\
     &= \frac{t_0d}{c_{L}},
\end{align*}
where $\left| \eta_{t}/(\sqrt{\widehat{\vec{v}}_{t}})_{j} - \eta_{t-1}/(\sqrt{\widehat{\vec{v}}_{t-1}})_{j} \right| \leq 1/c_{L}$.

For the term with $C_{3}$, we have
\begin{align*}
    \E\left[\sum_{t=2}^{T-1}\left\| \frac{\eta_{t}}{\sqrt{\widehat{\vec{v}}_{t}}} - \frac{\eta_{t-1}}{\sqrt{\widehat{\vec{v}}_{t-1}}} \right\|^{2} \right] &= \E\left[\sum_{t=2}^{t_0}\left\| \frac{\eta_{t}}{\sqrt{\widehat{\vec{v}}_{t}}} - \frac{\eta_{t-1}}{\sqrt{\widehat{\vec{v}}_{t-1}}} \right\|^{2} + \sum_{t=t_0+1}^{T}\left\| \frac{\eta_{t}}{\sqrt{\widehat{\vec{v}}_{t}}} - \frac{\eta_{t-1}}{\sqrt{\widehat{\vec{v}}_{t-1}}} \right\|^{2}\right] \\
    &\leq \E\left[\frac{1}{c}\left(\sum_{t=2}^{t_0}\left\| \frac{\eta_{t}}{\sqrt{\widehat{\vec{v}}_{t}}} - \frac{\eta_{t-1}}{\sqrt{\widehat{\vec{v}}_{t-1}}} \right\|_{1} + \sum_{t=t_0+1}^{T}\left\| \frac{\eta_{t}}{\sqrt{\widehat{\vec{v}}_{t}}} - \frac{\eta_{t-1}}{\sqrt{\widehat{\vec{v}}_{t-1}}} \right\|_{1}\right)\right] \\
    &\leq \frac{1}{c_{L}}\left(\frac{(t_0-1)d}{c_{L}}+\frac{d}{c_{L}}\right) \\
    &=\frac{t_0d}{c_{L}^{2}}.
\end{align*}

Then we have for AdaAct,
\begin{align}
\label{ineq:upperbound}
    &\E\left[ C_{1}\sum_{t=1}^{T}\left\|\eta_{t}\vec{g}_{t}/\sqrt{\widehat{\vec{v}}_{t}}\right\|^{2} + C_{2}\sum_{t=2}^{T}\left\| \frac{\eta_{t}}{\sqrt{\widehat{\vec{v}}_{t}}} - \frac{\eta_{t-1}}{\sqrt{\widehat{\vec{v}}_{t-1}}} \right\|_{1} + C_{3}\sum_{t=2}^{T-1}\left\| \frac{\eta_{t}}{\sqrt{\widehat{\vec{v}}_{t}}} - \frac{\eta_{t-1}}{\sqrt{\widehat{\vec{v}}_{t-1}}} \right\|^{2} \right] + C_{4} \nonumber \\ 
    &\leq \frac{C_{1}H^{2}}{c_{L}^{2}} (1+\log{T})+\frac{C_{2}t_0d}{c_{L}}+\frac{C_{3}t_0d}{c_{L}^{2}}+C_{4}.
\end{align}

Now we lower bound the effective stepsizes by Assumption~\ref{as:1},
\[
\frac{\eta_{t}}{(\sqrt{\widehat{\vec{v}}_{t}})_{j}} \geq \frac{1}{c_{U}\sqrt{t}}.
\]

Thus, 
\begin{equation}
    \E\left[ \sum_{t=1}^{T}\eta_{t}\langle\grad{f(\vec{\theta}_{t})},\ \grad{f(\vec{\theta}_{t})}/\sqrt{\widehat{\vec{v}}_{t}} \rangle \right] \geq \E\left[ \sum_{t=1}^{T} \frac{1}{c_{U}\sqrt{t}} \left\|\grad{f(\vec{\theta}_{t})}\right\|^{2} \right] \geq \frac{\sqrt{T}}{c_{U}} \underset{t\in [T]}{\min}{\E\left[\left\|\grad{f(\vec{\theta}_{t})}\right\|^{2} \right]}. \label{ineq:lowerbound}
\end{equation}

Then by (\ref{ineq:thm3.1}), (\ref{ineq:upperbound}), and (\ref{ineq:lowerbound}), we have 
\[
\frac{\sqrt{T}}{c_{U}} \underset{t\in [T]}{\min}{\E\left[\left\|\grad{f(\vec{\theta}_{t})}\right\|^{2} \right]} \leq \frac{C_{1}H^{2}}{c_{L}^{2}} (1+\log{T})+\frac{C_{2}t_0d}{c_{L}}+\frac{C_{3}t_0d}{c_{L}^{2}}+C_{4}
\]
which is equivalent to
\begin{align*}
    \underset{t\in [T]}{\min}{\E\left[\left\|\grad{f(\vec{\theta}_{t})}\right\|^{2} \right]} &\leq \frac{c_{U}}{\sqrt{T}}\left( \frac{C_{1}H^{2}}{c_{L}^{2}} (1+\log{T})+\frac{C_{2}t_0d}{c_{L}}+\frac{C_{3}t_0d}{c_{L}^{2}}+C_{4} \right) \\
    &= \frac{1}{\sqrt{T}}\left(Q_{1}+Q_{2}\log{T}\right).
\end{align*}
\end{proof}

\section{Proof of Theorem~\ref{thm:stability}}
\label{proof:stability}
By definition, we have
\begin{align}
  \E\left[\Delta_{T+1}\right]
  &=\E\left[\norm{\vec{\theta}_{T+1} - \vec{\theta}_{T+1}'}_2\right] \nonumber \\
  &=\E{\norm*{\vec{\theta}_1 - \sum_{t=1}^T\frac{\eta\vec{m}_t}{\sqrt{\hvec{v}_t}}
    - \left(\vec{\theta}_1' -\sum_{t=1}^T
    \frac{\eta\vec{m}_t'}{\sqrt{\hvec{v}_t'}}\right)}} \nonumber \\
  &\leq \E{\norm{\vec{\theta}_1 - \vec{\theta}_1'}} +
    \sum_{t=1}^T\eta\E{\norm*{\frac{\vec{m}_t}{\sqrt{\hvec{v}_t}} -\frac{\vec{m}_t'}{\sqrt{\hvec{v}_t'}}}_2}\\
  &=\sum_{t=1}^T\eta\E{\norm*{\frac{\vec{m}_t}{\sqrt{\hvec{v}_t}} -\frac{\vec{m}_t'}{\sqrt{\hvec{v}_t'}}}_2}\\
  &= \sum_{t=1}^T\eta\E{\norm*{\frac{\sum_{k=1}^t
    \beta_1^{t-k}(1-\beta_1)\vec{g}_t}{\sqrt{\hvec{v}_t}}
    -\frac{\sum_{k=1}^t
    \beta_1^{t-k}(1-\beta_1)\vec{g}_t'}{\sqrt{\hvec{v}_t'}}}_2} \nonumber \\
  &= \sum_{t=1}^T\eta\E{\norm*{\frac{\sum_{k=1}^t
    \beta_1^{t-k}(1-\beta_1)\grad{f}(\vec{\theta}_k;\xi_{i_k})}{\sqrt{\hvec{v}_t}}
    -\frac{\sum_{k=1}^t \beta_1^{t-k}(1-\beta_1)\grad{f}(\vec{\theta}_k';\xi_{i_t}')}{\sqrt{\hvec{v}_t'}}}_2} \nonumber \\
  &\leq \sum_{t=1}^T\sum_{k=1}^t \eta
    \beta_1^{t-k}(1-\beta_1) \E{\norm*{\frac{\grad{f}(\vec{\theta}_k;\xi_{i_k})}{\sqrt{\hvec{v}_t}}
    - \frac{\grad{f}(\vec{\theta}_k';\xi_{i_k}')}{\sqrt{\hvec{v}_t'}}}_2
    }  \label{eq:stability_ub}
\end{align}
At iteration $k$, we have $\xi_{i_k} = \xi_{i_k}'$ with probability $1 - \frac{1}{N}$.
\begin{align*}
  & \E{\norm*{\frac{\grad{f}(\vec{\theta}_k;\xi_{i_k})}{\sqrt{\hvec{v}_t}}
    - \frac{\grad{f}(\vec{\theta}_k';\xi_{i_k}')}{\sqrt{\hvec{v}_t'}}}_2} \\
  &\leq \frac{1}{N}\E{\norm*{\frac{\grad{f}(\vec{\theta}_k;\xi_{i_k})}{\sqrt{\hvec{v}_t}}}_2}
    + \frac{1}{N}\E{\norm*{\frac{\grad{f}(\vec{\theta}_k';\xi_{i_k}')}{\sqrt{\hvec{v}_t'}}}_2}
    + \left(1-\frac{1}{N}\right) \E{\norm*{\frac{\grad{f}(\vec{\theta}_k;\xi_{i_k})}{\sqrt{\hvec{v}_t}}
    - \frac{\grad{f}(\vec{\theta}_k';\xi_{i_k})}{\sqrt{\hvec{v}_t'}}}_2}\\
  &\leq \frac{1}{N}\E{\norm*{\frac{\grad{f}(\vec{\theta}_k;\xi_{i_k})}{\sqrt{\hvec{v}_t}}}_2}
    +
    \frac{1}{N}\E{\norm*{\frac{\grad{f}(\vec{\theta}_k';\xi_{i_k}')}{\sqrt{\hvec{v}_t'}}}_2}
    + 
    \left(1-\frac{1}{N}\right) \E{\norm*{\frac{\grad{f}(\vec{\theta}_k;\xi_{i_k})}{\sqrt{\hvec{v}_t}}
    - \frac{\grad{f}(\vec{\theta}_k;\xi_{i_k})}{\sqrt{\hvec{v}_t'}}}_2} \\
  &\qquad + \left(1-\frac{1}{N}\right) \E{\norm*{\frac{\grad{f}(\vec{\theta}_k;\xi_{i_k})}{\sqrt{\hvec{v}_t'}}
    - \frac{\grad{f}(\vec{\theta}_k';\xi_{i_k})}{\sqrt{\hvec{v}_t'}}}_2} \\
\end{align*}
By plugging the above into~\eqref{eq:stability_ub}, we obtain
\begin{align*}
  \E{\Delta_{T+1}}
  &\leq \eta(1-\beta_1)\sum_{t=1}^T\sum_{k=1}^t\beta_1^{t-k}\left\{
    \frac{1}{N}\E{\norm*{\frac{\grad{f}(\vec{\theta}_k;\xi_{i_k})}{\sqrt{\hvec{v}_t}}}_2}
    +
    \frac{1}{N}\E{\norm*{\frac{\grad{f}(\vec{\theta}_k';\xi_{i_k}')}{\sqrt{\hvec{v}_t'}}}_2}\right. \\
  & \qquad \qquad + 
    \left(1-\frac{1}{N}\right) \E{\norm*{\frac{\grad{f}(\vec{\theta}_k;\xi_{i_k})}{\sqrt{\hvec{v}_t}}
    - \frac{\grad{f}(\vec{\theta}_k;\xi_{i_k})}{\sqrt{\hvec{v}_t'}}}_2} \\
  &\qquad \qquad\left. + \left(1-\frac{1}{N}\right)
    \E{\norm*{\frac{\grad{f}(\vec{\theta}_k;\xi_{i_k})}{\sqrt{\hvec{v}_t'}} 
    -
    \frac{\grad{f}(\vec{\theta}_k';\xi_{i_k})}{\sqrt{\hvec{v}_t'}}}_2}\right\}\,.\\
  &\overset{(i)}{\leq} \eta(1-\beta_1)\sum_{t=1}^T\sum_{k=1}^t\beta_1^{t-k}\left\{
    \frac{2H}{nc_L} +
    \left(1-\frac{1}{N}\right)H \E{\norm*{\frac{1}{\sqrt{\hvec{v}_t}}
    - \frac{1}{\sqrt{\hvec{v}_t'}}}_2}\right. \\
  &\qquad \qquad \left. + \left(1-\frac{1}{N}\right)
    \E{\norm*{\frac{\grad{f}(\vec{\theta}_k;\xi_{i_k})}{\sqrt{\hvec{v}_t'}} 
    -
    \frac{\grad{f}(\vec{\theta}_k';\xi_{i_k})}{\sqrt{\hvec{v}_t'}}}_2}\right\}\,.\\
  &\overset{(ii)}{\leq} \eta(1-\beta_1)\sum_{t=1}^T\sum_{k=1}^t\beta_1^{t-k}\left\{
    \frac{2H}{Nc_L} +
    \left(1-\frac{1}{N}\right)H \E{\norm*{\frac{1}{\sqrt{\hvec{v}_t}}
    - \frac{1}{\sqrt{\hvec{v}_t'}}}_2} \right.\\
  &\qquad \left. + \left(1-\frac{1}{N}\right)\frac{L}{c_L}
    \E{\norm{\vec{\theta}_k - \vec{\theta}_k'}_2}\right\}\,.\\
  &\overset{(iii)}{\leq} \frac{2\eta HT}{Nc_L} +
    \frac{\eta H(N-1)}{N} \sum_{t=1}^T\underbracket{\E{\norm*{\frac{1}{\sqrt{\hvec{v}_t}}
    - \frac{1}{\sqrt{\hvec{v}_t'}}}_2}}_{A}
    + \frac{\eta(1-\beta_1)L}{c_L}\sum_{t=1}^T\underbracket{\E{\sum_{k=1}^t\beta_1^{t-k}\Delta_k}}_{B}\,,
\end{align*}
where (i) is due to the upper bound on gradient and lower bounded on
activation variance, (ii) is due to the Lipschitz continuity of
gradient, and (iii) is obtained by applying
$\sum_{k=1}^{t}\beta^{t-k}\leq \frac{1}{1-\beta}$.

\newpage
\section{Bounded Activations from AdaAct}
\label{apdx:adaact_actvar_bound}

\begin{figure}[ht]
\centering
\includegraphics[width=1\columnwidth]{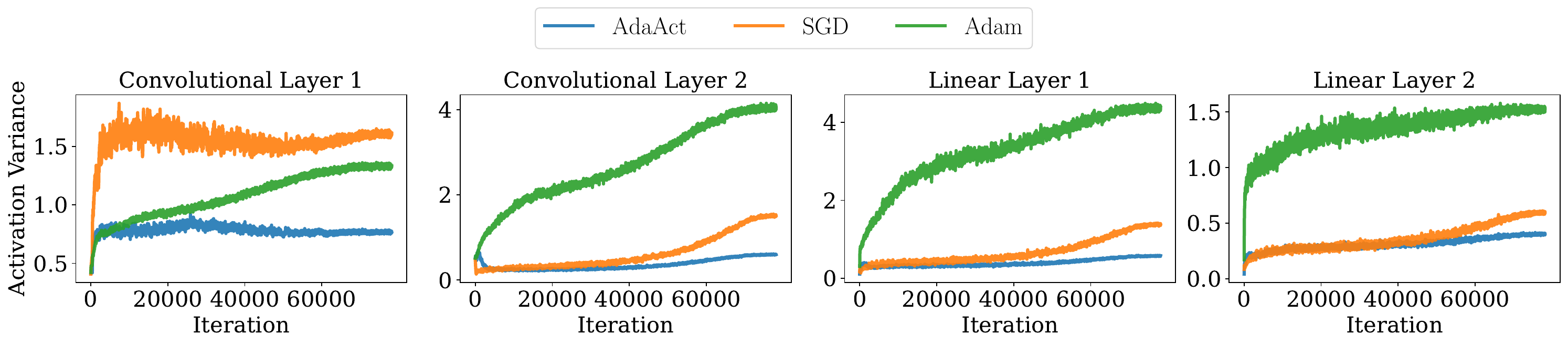}
\caption{Activation variance resulted from training LeNet-5 on CIFAR10}
\label{fig:cifar10_lenet5_actvar_all}
\end{figure}

We trained LeNet-5 on CIFAR10 for 200 epochs to observe the trend of activation variance over iterations. Figure~\ref{fig:cifar10_lenet5_actvar_all} presents the average of the activation variances across all hidden layers in the architecture. We observe that the activations from layers are bounded.

\section{Hyperparameters of Opimizers for Training CIFAR datasets}
\label{apdx:hyperpar}

\begin{table}[ht]
    \caption{Hyperparameter values used in CIFAR datasets training}
    \label{tab:cifar_hyperpar}
    \centering
    \begin{tabular}{c|cccccccccc}
        \toprule
         & $\eta$ & Mom. & $\beta_{1}$ & $\beta_{2}$ & $\beta_{3}$ & $\lambda$ & $\epsilon$ & $\delta$ & $T_{cov}$ & $T_{inv}$ \\ 
         \midrule
         SGD & 0.1 & 0.9 & . & . & . & $5\times 10^{-4}$ & . & . & . & .\\
         \adaact & 0.1 & . & 0.9 & 0.999 & . & $2\times 10^{-3}$ & $1\times 10^{-8}$ & . & . & .\\
         Adam & 0.001 & . & 0.9 & 0.999 & . & $5\times 10^{-4}$ & $1\times 10^{-8}$ & . & . & .\\
         AdamW & 0.001 & . & 0.9 & 0.999 & . & $1\times 10^{-2}$ & $1\times 10^{-8}$ & . & . & .\\
         Adan & 0.01 & . & 0.98 & 0.92 & 0.99 & $1\times 10^{-2}$ & $1\times 10^{-8}$ & . & . & .\\
         FOOF & 0.05 & 0.9 & . & 0.95 & . & $5\times 10^{-4}$ & . & 1 & 5 & 50\\
         KFAC & 0.05 & 0.9 & . & 0.9 & . & $5\times 10^{-4}$ & . & 1, 10 & 5 & 50\\
         \bottomrule
    \end{tabular}
\end{table}

$\delta$ denotes the damping factor, $T_{cov}$ is the update period for the covariance matrix of activations or pre-activation gradients, and $T_{inv}$ represents the update period for the inverse of the preconditioning matrix used in FOOF and KFAC. For those two optimizers, $\beta_{2}$ indicates the exponential moving average coefficient for the preconditioner.


\end{document}